\newcommand{\e}{\mathrm{e}}
\renewcommand{\d}{\mathrm{d}}
\newcommand{\E}{\mathbb{E}}
\newcommand{\V}{\mathbb{V}}
\theoremstyle{plain}
\newtheorem{theorem}{Theorem}[section]
\newtheorem{proposition}[theorem]{Proposition}
\newtheorem{lemma}[theorem]{Lemma}
\newtheorem{corollary}[theorem]{Corollary}
\newtheorem*{theoremr*}{Theorem~\ref{th1}}
\newtheorem*{theoremr2*}{Theorem~\ref{th2}}
\newtheorem*{corr*}{Corollary~\ref{cor1}}
\newtheorem*{propositionr*}{Proposition~\ref{prop1}}
\newtheorem*{propositionr2*}{Proposition~\ref{p2}}
\theoremstyle{definition}
\newtheorem{definition}[theorem]{Definition}
\newtheorem{assumption}[theorem]{Assumption}
\theoremstyle{remark}
\newtheorem{remark}[theorem]{Remark}
\icmltitlerunning{Controlling Posterior Collapse by an Inverse Lipschitz Constraint on the Decoder Network}
\begin{document}

\twocolumn[\icmltitle{Controlling Posterior Collapse by an Inverse Lipschitz Constraint on the Decoder Network}




\begin{icmlauthorlist}
\icmlauthor{Yuri Kinoshita}{utokyo}
\icmlauthor{Kenta Oono}{pfn}
\icmlauthor{Kenji Fukumizu}{ist}
\icmlauthor{Yuichi Yoshida}{nii}
\icmlauthor{Shin-ichi Maeda}{pfn}
\end{icmlauthorlist}

\icmlaffiliation{utokyo}{The University of Tokyo, Tokyo, Japan. Work done at Preferred Networks.}
\icmlaffiliation{pfn}{Preferred Networks, Inc., Tokyo, Japan}
\icmlaffiliation{ist}{The Institute of Statistical Mathematics, Tokyo, Japan}
\icmlaffiliation{nii}{National Institute of Informatics (NII), Tokyo, Japan}

\icmlcorrespondingauthor{Yuri Kinoshita}{yuri-kinoshita111@g.ecc.u-tokyo.ac.jp}
\icmlcorrespondingauthor{Shin-ichi Maeda}{ichi@preferred.jp}

\icmlkeywords{Machine Learning, ICML}

\vskip 0.3in
]



\printAffiliationsAndNotice{}  

\begin{abstract}
Variational autoencoders (VAEs) are one of the deep generative models that have experienced enormous success over the past decades. However, in practice, they suffer from a problem called posterior collapse, which occurs when the encoder coincides, or collapses, with the prior taking no information from the latent structure of the input data into consideration. In this work, we introduce an inverse Lipschitz neural network into the decoder and, based on this architecture, provide a new method that can control in a simple and clear manner the degree of posterior collapse for a wide range of VAE models equipped with a concrete theoretical guarantee. We also illustrate the effectiveness of our method through several numerical experiments.
\end{abstract}

\section{Introduction}
\label{sec1}

\subsection{Background and Organization}
Over the past decades, generative models that aim to capture the distribution of a given data have intensively contributed to the creation of many performant algorithms in the field of machine learning and artificial intelligence. The recent surge of interest in incorporating expressive neural networks into statistical and probabilistic methods has even more enhanced their ability to handle high-dimensional data such as image, text and speech. Especially, variational autoencoders (VAEs) are one of these deep generative models that have experienced enormous success~\citep{KW2013,RMW2014}. They can draw low-dimensional latent random variables from a predefined prior distribution and transform them into meaningful data using deep neural networks trained on a tractable objective function called the evidence lower bound (ELBO).
\par While VAEs are nowadays omnipresent in the field of machine learning, it is also widely recognized that there remain in practice some major challenges that still require effective solutions. Notably, they suffer from the problem of \emph{posterior collapse}, which occurs when the distribution corresponding to the encoder coincides, or collapses, with the prior taking no information from the latent structure of the input data into consideration. Also known as \emph{KL vanishing} or \emph{over-pruning}, this phenomenon makes VAEs incapable to produce pertinent representations and has been reportedly observed in many fields (e.g.,~\citet{BVVDJB2016,FLLGCC2019,WZ2022,YK2017}). There exists now a large body of literature that examines its underlying causes and presents various techniques to prevent it (e.g.,~\citet{BVVDJB2016,HS2019,RO2019}). Please refer to Subsection \ref{rw} for further details. 
\par Despite this abundant number of studies conducted so far, the mechanism of posterior collapse is not completely understood, and many different approaches, such as $\beta$-VAE and $\delta$-VAE, have been suggested over time based on varied hypotheses and theories. Some blame the variational inference~\citep{BGS2015,BVVDJB2016,CKSDDSSA2016,FLLGCC2019,HTLC2018,HT2020,SRMSW2016,ZLE2018}, some focus on the optimization procedure~\citep{HS2019,KW2018,LH2019}, and others hold the formulation of the model responsible~\citep{DWW2020,GK2016,YH2017,VV2017,ZY2020,DK2019,YK2017,RO2019}. Nevertheless, most proposed methods are based on heuristics and crucially lack convincing theoretical guarantees. That is why a line of work also tries to rigorously analyse the mechanism of posterior collapse~\citep{DWW2020, WZ2022,LT2019}. For example, the recent work of~\citet{WBC2021} showed that posterior collapse and latent variable non-identifiability are equivalent. These theoretical works have indeed helped us recognize the primary cause of this phenomenon. However, they require the explicit formulation of the VAE and objective function or too rigid a definition of posterior collapse. This means guarantees and suggested techniques are either only applicable to simple problems or lack practical usefulness.
\par Therefore, a technique that has both a theoretical guarantee and a broad applicable spectrum is first and foremost required. In this paper, we investigate a method that can control in a simple and clear manner the degree of posterior collapse for a wide range of VAE models equipped with a concrete analysis that assures this control.
\subsection{Contributions}
The major contributions of this paper can be summarized as follows:
\begin{itemize}
    \item We introduce the concept of \emph{inverse Lipschitzness} into the underlying decoder and prove under minor assumptions that the degree of posterior collapse can be controlled by this property. 
    \item Based on this theoretical guarantee, we provide the first method that can not only directly adjust the degree of posterior collapse but is also simple and applicable to a broad type of models. 
    \item We explain and illustrate with experiments that our method is effective and can outperform prior works.
\end{itemize}
\subsection{Related Works}\label{rw}
The first held responsible for posterior collapse was the Kullback-Leibler (KL) divergence between the encoder and prior, present in the formulation of the ELBO~\citep{BVVDJB2016}. It would force the model to prioritize its minimization, leading to posterior collapse. As a result, many previous works have tried to attenuate its influence during the training with an annealing scheme~\citep{HTLC2018,SRMSW2016,FLLGCC2019}. These works are often summarized as $\beta$-VAE originally created for other goals~\citep{HM2016}. More loosely, the problem arises in a sense because optimization fails and is driven into undesired minima. This has encouraged some researchers to find tighter bounds than the ELBO~\citep{BGS2015} or other objective functions~\citep{ZLE2018,CKSDDSSA2016,HT2020}. There have even been attempts to find a more suited optimization procedure~\citep{HS2019,KW2018,LH2019}. Others have pointed out that this phenomenon mainly happens when VAEs involve high flexibility due to neural networks~\citep{DWW2020}. This point of view has incited researchers to restrict the flexibility of VAEs~\citep{GK2016,YH2017} or to modify their architecture~\citep{VV2017,ZY2020,DK2019,YK2017,LW2022}. Although posterior collapse can be regarded as the optimization falling into local minima, the recent theoretical investigation of~\citet{LT2019} underlined that these spurious stationary points are not created by the ELBO but are actually inherent in the exact maximization of the marginal log-likelihood. 
\par $\delta$-VAE~\citep{RO2019} is a model that constrains the variational family of the posterior to assure a minimum distance from the prior in terms of KL-divergence. That way, they avoid, by definition, posterior collapse. However, finding parameters that satisfy this structural constraint involves an additional tedious optimization in general. In this paper, we will provide a method that can similarly control the discrepancy between the posterior and prior but without any calculations needed at all.
\par As the most relevant work to this paper,~\citet{WBC2021} showed that posterior collapse occurs if and only if latent variables are non-identifiable in the generative model. They proposed an architecture called latent identifiable VAE (LIDVAE) that solves this problem of non-identifiability without losing the flexibility of VAEs. Their definition of posterior collapse requires the posterior and prior to be exactly equal, which means their model cannot avoid the usual case in practice where the posterior and prior are \textit{nearly} equal. In this paper, we will work in the same framework as~\citet{WBC2021} but provide a simple model that can handle broader situations closer to reality.
\par In the context of generative adversarial networks,~\citet{YK2019} introduced into the transport map a concept equivalent to the inverse Lipschitzness in order to promote the entropy of the generator distribution and to avoid a phenomenon called mode collapse. In contrast to inverse Lipschitzness, the role of Lipschitz continuity in VAEs and other machine learning methods has been the subject of much research~\citep{YR2020,BC2022}. For example,~\citet{BC2022} constrained many components of the VAE including the encoder and the decoder to satisfy Lipschitz continuity and showed this to be useful to increase VAE robustness against adversarial attacks.
\paragraph{Organization} In Section~\ref{sec2}, we will first explain the basic formulation of VAEs and clarify necessary mathematical backgrounds. Section~\ref{sec3} will be devoted to the description of our theoretical analysis, and Section~\ref{sec4} to the implementation of our proposed model. Finally, Section~\ref{sec5} will illustrate its effectiveness with synthetic and real-world data.
\paragraph{Notation} The Euclidean norm is denoted by $\|\cdot\|$ for vectors. The exponential family $h(x)\exp\left\{T(x)^\top \xi-A(\xi)\right\}$, where $T(x)=(T_1(x),\ldots,T_t(x))^\top$ is called the sufficient statistic, is represented as $\mathrm{EF}_{T}(x\mid \xi)$, abbreviating the dependence on $h$ since it does not appear in our analysis. 

\section{Preliminaries}
\label{sec2}

In this section, we briefly explain the mathematical background for VAEs and posterior collapse.
\subsection{Variational Autoencoders}
Let $\bm{x}=(x_1,\ldots, x_n)\in \mathbb{R}^{m\times n}$ be $n$ i.i.d.\ data points in $\mathbb{R}^m$, and suppose they were generated from an underlying structure. In other words, VAEs assume there exists a latent variable $z\in \mathbb{R}^l$ sampled from a pre-defined prior $p(z)$ that creates the observed data through a conditional distribution $p_\theta(x\mid z)$ parameterized over $\theta$, also called a generative model. In short,
\begin{align}
    z_i\sim p(z),\quad x_i\sim p_\theta(x\mid z_i)\quad \forall i=1,\ldots,n. \label{model}
\end{align}
Under this problem setting, the ultimate goal of VAE is to maximize the following marginal log-likelihood:
\[
\log p_\theta(\bm{x})=\sum_{i=1}^n \log \int p_\theta(x_i\mid z)p(z)\d z.
\]
\par While this optimization can in theory determine values of $\theta$, it requires some intractable computations involving the marginalization over $z$. Therefore, a tractable lower bound called evidence lower bound (ELBO) has been proposed to avoid this problem. The idea is to first introduce a recognition model $q_\phi(z\mid x)$, which will approximate the true posterior $p_\theta(z\mid x)$. Then, the marginal log-likelihood can be formulated as follows:
\begin{align*}
\log p_\theta(\bm{x})=\sum_{i=1}^n\mathcal{L}_{\theta,\phi}(x_i)+D(q_\phi(z\mid x_i)\mid\mid p_\theta(z\mid x_i)),
\end{align*}
where 
\begin{align}
    \mathcal{L}_{\theta,\phi}(x)\vcentcolon=&\E_{q_\phi(z\mid x)}[\log p_\theta(x\mid z)]-D(q_\phi(z\mid x)\mid\mid p(z)),\label{klterm}
\end{align}
and $D(\cdot\mid\mid \cdot)$ refers to the KL-divergence. Consequently, we obtain the ELBO defined as the right-hand side of the following inequality:
\begin{align*}
\log p_\theta(\bm{x})\ge \sum_{i=1}^n\mathcal{L}_{\theta, \phi}(x_i).
\end{align*}
Here, we used the fact that KL-divergence is non-negative, and the equality holds only if $D(q_\phi(z\mid x_i)\mid\mid p_\theta(z\mid x_i))=0$ for all $x_i$. Via a reparameterization trick, the optimization over the ELBO in terms of $\theta$ and $\phi$ becomes more tractable than that of the log-likelihood~\citep{KW2013}. In this context, the recognition model $q_\phi(z\mid x)$ is often called an encoder, and the generative model $p_\theta(x\mid z)$ a decoder.
\subsection{Posterior Collapse}
Posterior collapse refers to the situation when the encoder coincides, or collapses, with the prior taking no information from the latent structure of the input data into consideration. Mathematically, this can be translated into the following general definition.
\begin{definition}[$\epsilon$-posterior collapse]\label{epc}
 For a given parameter $\hat\phi$, a data set $\bm{x}=(x_1,\ldots, x_n)\in \mathbb{R}^{m\times n}$ and a closeness criterion $d(\cdot,\cdot)$, an {\em $\epsilon$-posterior collapse} is defined for a given $\epsilon\ge 0$ by the condition
\begin{align}
     d(q_{\hat\phi}(z\mid x_i),p(z))\le \epsilon\quad \forall i=1,\ldots,n.\label{epceq}
\end{align} 
\end{definition}
For example, \citet{RO2019} concentrated on the case where the closeness criterion is
KL-divergence. The problem setting of~\citet{WBC2021} corresponds to a particular situation with $\epsilon$ set to 0 (see Definition \ref{defpc}). In fact, theoretical analyses that can treat such a level of abstract definition as $\epsilon$-posterior collapse are quite rare since most use the explicit formulation of the model, ELBO and marginal log-likelihood in order to draw conclusions. An even broader definition was introduced by~\citet{LT2019} for measurement purpose named $(\epsilon,\delta)$-posterior collapse which substitutes equation \eqref{epceq} with the stochastic formula $\Pr_x\left(d(p_{\hat\theta}(z\mid x_i),p(z))\le\epsilon\right)>1-\delta$. This formulation is outside the scope of this work, while an extension to this case is an interesting future direction.
\par Many elements of a VAE are held responsible for this phenomenon, namely the KL term in equation \eqref{klterm}, the variational approximation, the optimization scheme, and the model itself. Particularly, \citet{WBC2021} showed that the $0$-posterior collapse of Definition~\ref{defpc} and latent variable non-identifiability (Definition~\ref{defni}) are equivalent. This equivalence implies that it is sufficient to make the likelihood function $p_{\theta}(x\mid z)$ injective in terms of $z$ for all parameters $\theta$ in order to avoid posterior collapse of Definition~\ref{defpc}. This led to the model called LIDVAE proposed by~\citet{WBC2021}.
\begin{definition}[posterior collapse,~\citet{WBC2021}]\label{defpc}
 For a given parameter $\hat\theta$ and data set $\bm{x}=(x_1,\ldots, x_n)\in \mathbb{R}^{m\times n}$, {\em posterior collapse} occurs if $p_{\hat\theta}(z\mid \bm{x})=p(z)$.
\end{definition}
\begin{definition}[latent variable non-identifiability,~\citet{WBC2021}]\label{defni}
 For a given parameter $\hat\theta$ and data set $\bm{x}=(x_1,\ldots, x_n)\in \mathbb{R}^{m\times n}$, the latent variable is {\em non-identifiable} if $p_{\hat{\theta}}(\bm{x}\mid z)=p_{\hat{\theta}}(\bm{x}\mid z')$ for all $z$, $z'$,
 i.e., the likelihood of the data set $\bm{x}$ does not depend on the latent variable.
\end{definition}
Note they assumed that the variational approximation is exact. That is, the encoder can represent the posterior $p_\theta(z\mid x)$. This assumption is sensible; if the exact inference already presents symptoms of posterior collapse, this is the first problem to tackle before concentrating on the approximated case, which can only aggravate the situation. In the remainder of this paper, we will treat this exact case as well. In other words, we will investigate methods to mitigate the $\epsilon$-posterior collapse inherent in the formulation of the model and not that caused by any sort of approximation.
\par However, their definition of posterior collapse is clearly too restrictive because it requires the posterior and prior to be exactly equal, which means their model cannot avoid the usual case in practice where these are \textit{nearly} equal. This motivates us to extend Definition~\ref{defpc} to the general one (Definition~\ref{epc}) and discuss how to prevent it. 

\section{Theoretical Analysis}
\label{sec3}

In this section, we will show that the general posterior collapse of Definition \ref{epc} in terms of the relative Fisher information divergence can be controlled by a simple inverse Lipschitz constraint on the decoder network.
\subsection{Assumptions and Problem Setting}
Let us first clarify our problem setting.
\subsubsection{Generative Model}
\par We introduce the concept of inverse Lipschitzness.
\begin{definition}[inverse Lipschitzness]
    Let $L\ge 0$. $f:\mathbb{R}^l\to \mathbb{R}^t$ is $L$-inverse Lipschitz if $\|f(x)-f(y)\|\ge L\|x-y\|$ holds for all $x,y \in \mathbb{R}^l$.
\end{definition}
If $f$ is $L$-inverse Lipschitz with $L>0$, then it is injective. Therefore, the restriction of $f$ on its image possesses an inverse which is $1/L$-Lipschitz. Inverse Lipschitz functions can be regarded as a stronger condition than injectivity, which is the property of the decoder network suggested by~\citet{WBC2021} in order to avoid latent variable non-identifiability, and consequently posterior collapse. The motivation to introduce this stronger concept is to capture more nuances in the latent variable identifiability with the inverse Lipschitz constant $L$, which is not possible with simple injective functions.
\paragraph{Construction of inverse Lipschitz functions} Inverse Lipschitz neural networks can be generated by Brenier maps~\citep{B2004,WBC2021}. A \emph{Brenier map} is a function that is the gradient of a real-valued convex function. The gradient of a real-valued $L$-strongly convex function $F$ (i.e., $F(x)-L\|x\|^2/2$ is convex) becomes $L$-inverse Lipschitz. Therefore, theoretical results will be proved for this type of inverse Lipschitz functions, derivatives of strongly convex functions. Notably, this means we can only handle functions with the same input dimension and output dimension. Please refer to Section~\ref{sec4} for further details. 
\par We can now state our main assumption.
\begin{assumption}\label{as1}
The generative model $p_\theta(x\mid z)$ is an exponential family so that $p_\theta(x\mid z)=\mathrm{EF}_{T}(x\mid f_\theta(z))$, where $f_\theta:\mathbb{R}^l\to \mathbb{R}^t$ is constructed as follows. If $l=t$, $f_\theta$ is an $L$-inverse Lipschitz function generated from a Brenier map, and we denote $\Theta_L$ as the set of parameters $\theta$ that achieve this property. If $l< t$, $f_\theta=f_\theta^{(2)}(B^\top f_\theta^{(1)}(z))$, where $f_\theta^{(1)}:\mathbb{R}^l\to \mathbb{R}^l$ and $f_\theta^{(2)}:\mathbb{R}^t\to \mathbb{R}^t$ are respectively inverse Lipschitz with constant $L_1$ and $L_2$ generated from Brenier maps. $B$ is a $t\times l$ diagonal matrix with all diagonal elements with value 1. Likewise, we define the set $\Theta_{L_1,L_2}$.
\end{assumption}
This will be the sole condition that we will impose on the model. We restrict neither the type of prior nor the optimization scheme. The assumption that the generative model or likelihood function is an exponential family is keeping large liberty to the model. It is even less restrictive than some prior works on posterior collapse, which often necessitate all components, including the generative model, to be Gaussian (e.g.,~\citet{DWW2020,LT2019}). The sole limitation is our requirement of inverse Lipschitzness, which may restrict the expression of the likelihood function. However, the effect of this restriction is precisely one of the interests of this paper. Moreover, it is shown that when $f_\theta$ is only injective, $\mathrm{EF}_{T}(x\mid f_\theta(z))$ can model any distributions of the form $\mathrm{EF}_{T}(x\mid f(z))$ where $f$ is an arbitrary function~\citep{WBC2021}. Therefore, by adjusting this inverse Lipschitz constant, we can cover the full spectrum, i.e., from the case with no restriction ($L\to 0$) to the extremely restrictive ($L\to\infty$), which implies our problem setting still leaves considerable freedom to the model.
\subsubsection{Criterion}
\par As a closeness criterion, we will select the relative Fisher information divergence.
\begin{definition}
    We define the {\em relative Fisher information divergence} $F(\cdot\mid\mid\cdot)$ of $p(x)$ with respect to $q(x)$ as 
    \[
    F(p(x)\mid\mid q(x))\vcentcolon =\int \|\nabla \log p(x)-\nabla \log q(x)\|^2p(x)\d x.
    \]
\end{definition}
This divergence is used for a wide range of statistical analysis and machine learning applications \citep{YM2019,OV2000,EH2021,HW2017,W2016,HC2018}. It is also intrinsically related to the Hyv\"{a}rinen score~\citep{HD2005}. Furthermore, many types of distributions, such as Gaussian and Gaussian mixture, satisfy the log-Sobolev inequality (LSI), which provides an upper bound of the KL-divergence in terms of the relative Fisher information divergence (see Appendix \ref{apc} for the exact formulation). Many distributions can satisfy this inequality since LSI is robust to bounded perturbation and Lipschitz mapping~\citep{G1975,HS1986,L1999}. As a consequence, if a posterior that satisfies LSI collapses on the prior in terms of relative Fisher divergence, so will it in terms of KL-divergence. On the other hand, de Bruijn's identity relates KL divergence to relative Fisher divergence. Under some additional conditions, we can show that the control of the latter results in that of the former (see Proposition \ref{prop1} for further details). Hence, our choice of discrepancy is sensible since it is necessary and sometimes even sufficient to avoid the collapse in terms of relative Fisher divergence in order to prevent that in terms of KL-divergence.
\subsection{Theoretical Guarantee} 
We are now ready to state our main theorem, which shows that under Assumption~\ref{as1}, posterior collapse can be efficiently controlled by the inverse Lipschitz constant. We will first discuss in detail the case where $l=t$ for clarity, and then show that similar statements hold for the general case as well.
\begin{theorem}\label{th1}
Under model~\eqref{model}, Assumption~\ref{as1} and $l=t$, the following holds for all $i$ and $\theta\in\Theta_L$:
\begin{align*}
    & F( p(z)\mid\mid p_\theta(z\mid x_i)) \nonumber\\
    & \quad \ge L^2\int\|T(x_i)-\E_{p_\theta(x\mid z)}[T(x)]\|^2 p(z)\d z.
\end{align*}
\end{theorem}
See Appendix~\ref{sec:proof_main1} for the proof.
\begin{remark}
The crux of this theorem is the relation 
\[
\|\nabla_z \log p_\theta(z\mid x)-\nabla_z \log p(z)\|=\|\nabla_z\log p_\theta(x\mid z)\|.
\]
This equation relates posterior collapse (left-hand side) to the latent variable non-identifiability (right-hand side). Indeed, if the non-identifiability in Definition \ref{defni} holds, then $p_\theta(x\mid z)$ would be constant with respect to $z$, leading to a zero derivative and posterior collapse in the strict sense of the term. This use of derivatives enables us to additionally capture nuances of the latent variable identifiability and essentially contributes to our main result. A similar bound of Theorem~\ref{th1} can thus be derived for $F(p_\theta(z\mid x_i)\mid\mid p(z))$ as well.
\end{remark}
\begin{corollary}\label{cor1}
Under model~\eqref{model}, Assumption~\ref{as1} and $l=t$,
\begin{align*}
    &F( p(z)\mid\mid p_\theta(z\mid x_i))\nonumber\\
     &  \quad \ge L^2\inf_{\theta\in\Theta_L}\left\{\int \|T(x_i)-\E_{p_\theta(x\mid z)}[T(x)]\|^2 p(z)\d z\right\}.
\end{align*}
Therefore, the lower bound is non-decreasing in terms of L. Moreover, if the infinimum of this lower bound is attained by a parameter $\theta\in\Theta_L$ and $p(z)$ has a positive variance, then the lower bound is monotonically increasing in terms of L.
\end{corollary}
In other words, we are guaranteed to avoid an $\epsilon$-posterior collapse as long as we take a sufficiently large $L$. Increasing this value has the effect of moving away the posterior from the prior without any limits. In this sense, we can \textit{control} the degree of posterior collapse only with the inverse Lipschitz constant.
\par This theorem adjusted for an empirical version of the relative Fisher information divergence provides additional insights concerning another formulation of the lower bound.
\begin{theorem}\label{th2}
Under model~\eqref{model}, Assumption~\ref{as1} and $l=t$ the following holds for all $\theta\in\Theta_L$:
\begin{align}\label{eq26}
    &\bar{F}_\theta(\bm{x}) \vcentcolon=\nonumber\\
     &\int \left\|\frac{1}{n}\sum_{i=1}^n\nabla_z \log p_\theta(z\mid x_i)-\nabla_z \log p(z)\right\|^2p(z)\d z
     \nonumber\\
    &\quad \ge  L^2
     \int\left\|\frac{1}{n}\sum_{i=1}^nT(x_i)-\E_{p_\theta(x\mid z)}[T(x)]\right\|^2p(z)\d z.
\end{align}
\end{theorem}
See Appendix~\ref{sec:proof_main2} for the proof. 

\paragraph{Bias-Variance Decomposition} Now, if we have enough samples (i.e., $n\to \infty$), we can approximate $\frac{1}{n}\sum_{i=0}^nT(x_i)$ as the expectation under the true distribution $p^\ast(x)$, $\E_{p^\ast(x)}[T(x)]$. Moreover, consider the generative model contains the true model $\theta^\ast$. This means there exists $\theta^\ast$ such that $p^\ast(x)=\int p_{\theta^\ast}(x\mid z)p(z)\d z$. Let us define $S_\theta\vcentcolon=E_{p_\theta(x\mid z)}[T(x)]$. Then, the integral of the right-hand side of Equation~\eqref{eq26} can be reformulated as 
\begin{align*}
    \int\|\E_{p_\theta(x\mid z)}&[T(x)]-\E_{p^\ast(x)}[T(x)]\|^2p(z)\d z\\
    =&\int\|S_\theta-\E[S_{\theta^\ast}]\|^2p(z)\d z\\
    =& \V[S_\theta]+\|\E[S_{\theta^\ast}]-\E[S_{\theta}]\|^2,
\end{align*} 
where $\V[\cdot]$ is the variance of $S_\theta$ in terms of $z\sim p(z)$.
As a result,
\begin{align*}
     \bar{F}_\theta(\bm{x})\ge& L^2\left(\V[S_\theta]+\|\E[S_{\theta^\ast}]-\E[S_{\theta}]\|^2\right)\label{lb}.
\end{align*}
Interestingly, the lower bound can be written as the sum of the variance of $S_\theta$ and its bias with the true parameter. 
\paragraph{General Case}
Now, let us state a similar theorem that holds for the general case $l< t$ under an additional condition.
\begin{theorem}\label{th3}
Under Assumptions \ref{as1}, $l<t$ and that $\xi \mapsto \nabla_\xi A(\xi)=\E_{\mathrm{EF}_T(x\mid \xi)}[T(x)]$ is a diffeomorphism, $F(p(z)\mid\mid p_\theta(z\mid x)$ is lower-bounded by a term that is increasing in terms of $L_1$.
\end{theorem}
See Appendix~\ref{apa4} for the precise formulation and for the proof. 
\paragraph{Expansion to KL divergence}
Finally, although it may require stronger assumptions, we can derive a lower bound of KL divergence between the posterior and prior from the bound of Fisher divergence as follows. 
\begin{proposition}\label{prop1}
Suppose that the lower bound of Fisher divergence $F(p(x) \mid\mid  q(x)) \geq \varepsilon$ holds for any small perturbations of $p$ and $q$ to some extent. More precisely, let $p_t$ (or $q_t$) denote the convolution between $p$ ($q$, resp.) and $N(0,t)$. Assume that there is $\delta > 0$ such that $F(p_t \mid\mid  q_t) \ge \epsilon$ for any $t\in[0,\delta]$. Then, the bound $D(p \mid\mid  q) \ge \frac{1}{2}\delta \epsilon$ holds.
\end{proposition}
See Appendix~\ref{sec:proof_prop} for the proof. This suggests that a lower bound of the Fisher divergence can also control the lower bound of KL divergence, and thus our method avoids posterior collapse in terms of KL divergence as well.
\subsection{Discussion}
The theoretical analysis led in the previous subsection considerably contributes to the research on posterior collapse in several aspects. First of all, we proved this phenomenon, inherent in the formulation of the model, can be controlled by the inverse Lipschitz constant of the underlying decoder. Indeed, increasing this constant forces the discrepancy between the prior and posterior to become larger. This kind of analysis against the general $\epsilon$-posterior collapse was provided neither in the work of~\citet{WBC2021} nor in most previous works at all. In fact, theoretical guarantee is often missing in previously proposed heuristic methods, such as $\beta$-VAE. $\delta$-VAE similarly requires finding parameters of prior and decoder that satisfy a discrepancy constraint in terms of KL-divergence, but this needs some intractable and heavy optimization in general. On the contrary, our method is far easier and simpler since we know in advance what kind of parameter is required: the inverse Lipschitz constant. Finally, while previous analyses only treated marginal or simple cases such as the Gaussian VAE~\citep{LT2019}, it is also important to note that our model can be used for the general exponential family. In short, we provided the first solution based on an inverse Lipschitz constraint that can \textit{control} the degree of posterior collapse equipped with a concrete theoretical guarantee, is simple and is applicable to a broad type of models.

\par On the other hand, our method also presents some drawbacks. The main limitation of our theory is that we only assure to avoid posterior collapse in terms of the relative Fisher information divergence, a weaker concept than that in terms of KL-divergence. This relaxation was necessary to proceed into rigorous analysis and derive theoretical guarantees. Nonetheless, we showed that posterior collapse in terms of KL divergence can also be controlled under some further assumptions and will show in the experiments that our method can often alleviate this stronger posterior collapse as well even though it is outside the scope of our guarantee. Furthermore, while increasing $L$ can avoid posterior collapse, this has, at the same time, the effect of contracting the set $\Theta_L$ at the risk of limiting the flexibility of the model. It is thus essential to find a good balance. However, this trade-off is clear thanks to our analysis, and the tuning is quite simple. 

\section{Implementation}
\label{sec4}

In this section, we will describe the implementation of our model. It turns out that it is rather simple since it only extends the model LIDVAE proposed by~\citet{WBC2021}. We just have to focus on the realization of a neural network $f_\theta$ that is inverse Lipschitz with respect to its output. The idea is to modify the Input Convex Neural Network (ICNN) of~\citet{AX2017} and compute its Brenier map so that it becomes an $L$-inverse Lipschitz function.
\subsection{Brenier Maps and ICNN}
\label{sec:BrenierICNN}
As previously mentioned, a Brenier map is the gradient of a real-valued convex function~\citep{B2004,WBC2021}. It is injective by definition. In addition, that of an $L$-strongly convex function will become $L$-inverse Lipschitz as desired.
\par ICNN is a neural architecture that creates convex functions~\citep{AX2017}. As long as this property is satisfied other algorithms can be chosen, but for clarity, we will use this as an example. \citet{AX2017} defined the fully connected ICNN $G_\theta:\mathbb{R}^l\to \mathbb{R}$ with $k$ layers and input $z\in\mathbb{R}^l$ as follows:
\begin{align*}
    y_{i+1}&=g_i(W_i^{(y)}y_i+W_i^{(z)}z+b_i)\ (i=0,\ldots,k-1),\\
    G_\theta(z)&=y_k,
\end{align*}
where $\{W_i^{(y)}\}_i$ are non-negative, and all functions $g_i$ are convex and non-decreasing. This kind of neural network is not only convex but is also known to be a universal approximator of convex function on a compact domain endowed with the sup norm~\citep{CS2018}. Given an ICNN $G_\theta$, it is thus not difficult to extend it to an $L$-strongly convex function since it suffices to add a regularization term $L\|z\|^2/2$ to the output as follows:
\begin{align*}
    y_{i+1}&=g_i(W_i^{(y)}y_i+W_i^{(z)}z+b_i)\ (i=0,\ldots,k-1),\\
    G_\theta(z)&=y_k+\frac{L}{2}\|z\|^2,
\end{align*}
where $\{W_i^{(y)}\}_i$ are non-negative and all functions $g_i$ are convex and non-decreasing. 
\par Provided an $L$-strongly convex ICNN $G_\theta$, we can compute its gradient as $f_\theta=\d G_\theta/\d z$ and obtain our desired  $L$-inverse Lipschitz neural network.
\subsection{IL-LIDVAE and its Variants}
The construction in Subsection~\ref{sec:BrenierICNN} leads to an extension of LIDVAE, simple but with strong theoretical guarantees, as shown in Section~\ref{sec3}. We will call it Inverse Lipschitz LIDVAE (IL-LIDVAE) in order to distinguish it from LIDVAE and other prior works.
\begin{definition}[IL-LIDVAE] \label{def:IL-LIDVAE}
We define IL-LIDVAE with inverse Lipschitz constants $(L_1,L_2)$ as the following generative model:
\begin{align*}
    z\sim p(z), \ \ \ x\sim \mathrm{EF}(x\mid f^{(2)}_\theta(B^\top f^{(1)}_\theta(z))),
\end{align*}
where $f_\theta^{(1)}:\mathbb{R}^l\to\mathbb{R}^l$ is $L_1$-inverse Lipschitz and $f_\theta^{(2)}:\mathbb{R}^t\to\mathbb{R}^t$ is $L_2$-inverse Lipschitz, both generated by Brenier maps. $B$ is an $l\times t$ diagonal matrix with all diagonal elements with value 1. 
\end{definition}
When $l=t$, we replace $f^{(2)}_\theta(B^\top f^{(1)}_\theta(z))$ with a single $L$-inverse Lipschitz function generated by a Brenier map.
\par We can also establish variants of IL-LIDVAE for mixture models (IL-LIDMVAE) and sequential models (IL-LIDSVAE) as~\citet{WBC2021} did. 
\begin{definition}[IL-LIDMVAE]\label{ilm}
We define IL-LIDMVAE with inverse Lipschitz constant $(L_1,L_2)$ as the following generative model:
\begin{align*}
   y&\sim\mathrm{Categorical}(1/c), z\sim \mathrm{EF}(z\mid B_1^\top y),\\
   x&\sim \mathrm{EF}(x\mid f^{(2)}_\theta(B_2^\top f^{(1)}_\theta(z))),
\end{align*}
where $f_\theta^{(1)}:\mathbb{R}^l\to\mathbb{R}^l$ is $L_1$-inverse Lipschitz and $f_\theta^{(2)}:\mathbb{R}^t\to\mathbb{R}^t$ is $L_2$-inverse Lipschitz, both generated by Brenier maps. $y$ is a one-hot vector that indicates the class. $B_1$ and $B_2$ are respectively a $c\times l$ and $l\times t$ diagonal matrix with all diagonal elements with value 1. 
\end{definition}
\begin{definition}[IL-LIDSVAE]\label{ils}
We define IL-LIDSVAE with inverse Lipschitz constant $(L_1,L_2)$ as the following generative model
\begin{align*}
    z_i\sim p(z),\
    x_i\sim \mathrm{EF}(x\mid f^{(2)}_\theta(B^\top f^{(1)}_\theta(z_i,h_\theta(x_{1:i-1})))),
\end{align*}
where $f_\theta^{(1)}:\mathbb{R}^l\to\mathbb{R}^l$ is $L_1$-inverse Lipschitz and $f_\theta^{(2)}:\mathbb{R}^t\to\mathbb{R}^t$ is $L_2$-inverse Lipschitz, both generated by Brenier maps. $B$ is an $l\times m$ diagonal matrix with all diagonal elements with value 1. 
\end{definition}
\subsection{Discussion}
In practice, the main limitation of this method resides in its computational cost and scalability due to the use of Brenier maps. Since our model is only an extension of LIDVAE \citet{WBC2021} and does not add additional calculation, as mentioned in their paper, fitting LIDVAE or IL-LIDVAE requires a computational complexity of $O(kp^2)$ (that of the classical VAE is $O(kp)$), where $k$ is the number of iterations, and $p$ the number of parameters. The training time is thus longer than the vanilla VAE. While this is the price to pay to assure the identifiability of latent variables and thus to avoid posterior collapse, this algorithm may consequently become difficult to scale for more challenging tasks. Nevertheless, the essential aspect of our model is the inverse Lipschitzness, and the suggested implementation is only the most exact realization of this property. As an approximation, we could consider to replace the Brenier maps and ICNN with a general deep neural network subjected to regularization or constraints, such as $\mathbb E[\|\nabla \log p(x|z)\|]>L$ or $\mathbb E[ \|\log p(x|z)-\log p(x|z’)\|/\|z-z’\|]>L$, that encourages this crucial inverse Lipschitzness. This is outside the scope of this work but we believe it is an interesting avenue for future work.

\section{Experiments}
\label{sec5}

In this section, we will illustrate our theoretical result with several numerical examples. Our goal is to empirically verify (i) whether IL-LIDVAE can indeed control the discrepancy between the posterior and prior with the inverse Lipschitz constant and (ii) if our model can achieve better performance than the vanilla VAE. In order to answer these questions, we will first use toy data and then switch to high-dimensional text and image data. 
\subsection{Toy Data}
\begin{figure}[t]
\begin{center}
\includegraphics[width=40 mm]{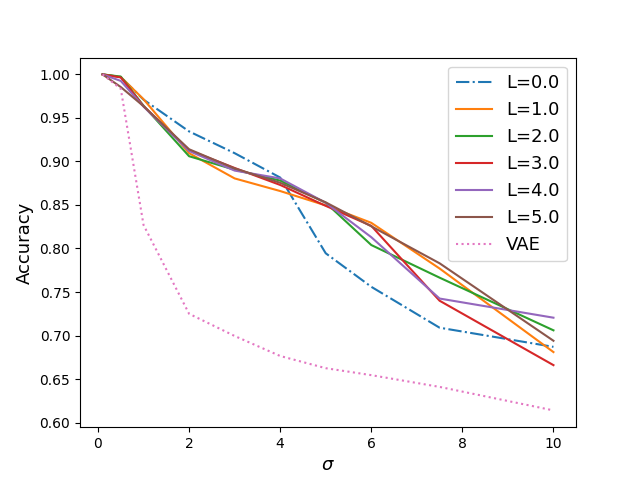}
\includegraphics[width=40 mm]{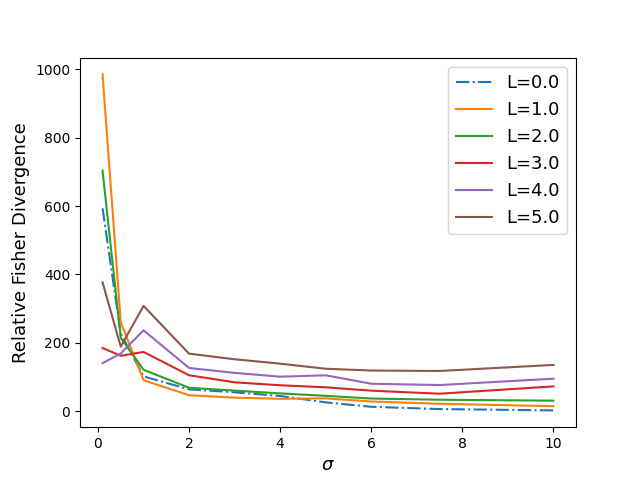}
\caption{Accuracy of the learned posterior (left) and Relative Fisher divergence between the posterior and prior (right) for different standard deviations $\sigma$ and inverse Lipschitz constants $L$. $L=0$ is also the LIDVAE~\citep{WBC2021}. Posterior collapse is happening for LIDVAE but can be controlled with IL-LIDVAE. ``VAE'' in the legend refers to the GMVAE.}
\label{fig:toy1}
\end{center}
\end{figure}
\begin{figure}
\centering
\includegraphics[width=40 mm]{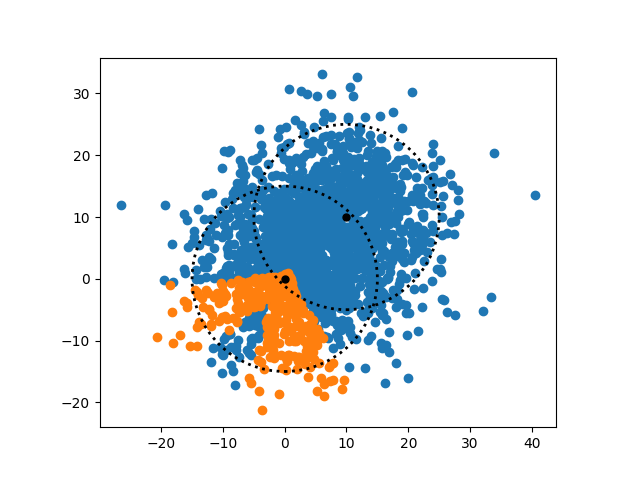}
\includegraphics[width=40 mm]{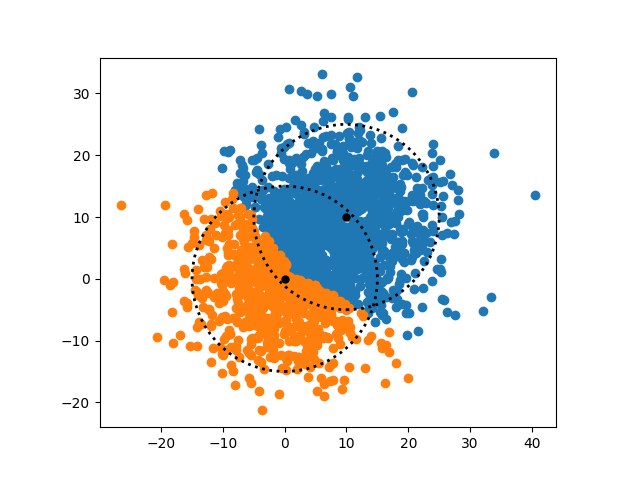}
\caption{Posterior of GMVAE (left) and IL-LIDMVAE with $L_1=L_2=5.0$ (right) for the toy data with $\sigma=7.5$. Black points are the means of $N((0,0)^\top,\sigma^2I_2)$ and $N((10,10)^\top,\sigma^2 I_2)$, and dashed circles delimit the $2\sigma$ area of each distributions. IL-LIDMVAE performs better. See Figure \ref{fig:aptoy} for more data.}
\label{fig:toy2}
\end{figure}
We generated 10,000 samples from $\mathcal{N}((0,0)^\top,\sigma^2 I_2)$ and from $\mathcal{N}((10,10)^\top,\sigma^2I_2)$ with different values of $\sigma$, where $I_2$ is the $2\times 2$ identity matrix. With these 20,000 data points, we trained our model IL-LIDMVAE (Definition~\ref{ilm}) defined for Gaussian mixtures with $c=2$. The dimension of the latent variables $l$ was set to 2, and the decoder was also Gaussian. For this case, we only need to control one inverse Lipschitz constant as $l=t$. $L=0$ corresponds to the LIDMVAE of \citet{WBC2021}. Intuitively, for large $\sigma$, the two classes will overlap each other and become similar to a single Gaussian distribution. This is expected to drive the model to represent the whole data with only one Gaussian distribution even though we set to learn two classes of data. IL-LIDMVAE should be able to avoid this with adequate values of $L$.
\par Results are shown in Figure~\ref{fig:toy1}. We used the Gaussian mixture version for VAE, called GMVAE~\citep{DM2016}. For these complicated settings of high variance, the posterior of LIDVAE ($L=0.0$) is collapsing on the prior with a relative Fisher divergence of almost 0, and the accuracy becomes lower than other graphs for $\sigma \geq 5$.  This supports our claim that LIDVAE can only avoid the exact posterior collapse, and that we need a method that can flexibly avoid any degree of $\epsilon$-posterior collapse. Note, the normal VAE does not perform well at all. On the contrary, IL-LIDVAE can adapt to any degree of posterior collapse as higher values of $L$ achieve higher divergence. This leads in general to better accuracy as it is illustrated in Figure \ref{fig:toy2}. 
\subsection{Training on Images and Text}
We used two image data sets, namely Fashion-MNIST~\citep{XR2017} and Omniglot~\citep{LS2015}, and one text data set, namely the synthetic text data provided by~\citet{WBC2021}. IL-LIDMVAE was applied to the image data, and IL-LIDSVAE to the text data. All distributions were set to Gaussian. We compared the performance in terms of the negative log-likelihood and the KL-divergence between the posterior and the prior. We used RealNVPs~\citep{LS2015} for images and LSTMs~\citep{HS1997} for text in order to keep the decoder as flexible as possible. Results are shown in Tables~\ref{tab:syn} and~\ref{tab:fash}.\footnote{We did not cite the result of~\citet{WBC2021} in Table~\ref{tab:fash} for Omniglot because they provided values of the negative log-likelihood (around 600) that were too far from ours (around 100) which are of the same order as those reported by other papers such as \citet{TW2018}.} Further details of the experiments, including additional results, can be found in Appendix~\ref{apb} and in the supplementary material. 
\par As expected, the KL divergence increases in all tables as we augment the value of $L$ and can effectively avoid posterior collapse when it happens. In most cases, the negative log-likelihood is also improved compared to other algorithms. As for Fashion-MNIST, we presented in Figure \ref{fig:sfash} some samples generated from the trained IL-LIDMVAE. Fashion-MNIST contains 10 classes. We can notice that LIDVAE (a) can outputs high-quality data but cannot learn the category of bags, which collapses on several other classes. On the contrary, IL-LIDVAE that achieves the best loss with $L_1=L_2=1.5$ can reproduce these 10 distinct classes with varied data in each one. With too high inverse Lipschitz constants, IL-LIDVAE cannot output high-quality data since the constraint is too strong and the model cannot find good parameters at all. 
\par For Fashion-MNIST, the classification accuracy of the model based on the posterior of the mixture components was 0.56 for $L=0$, 0.58 for $L=0.5$, 0.64 for $L=1.5$ and 0.12 for $L=5.0$. Our model is an unsupervised learning method, and the clustering method for Fashion-MNIST using baseline autoencoders achieves an accuracy of 0.54~\citep{AA2020}, and that using the classical VAE 0.12 in our experiment. As we can observe, the appropriate choice of the inverse Lipschitz constant ($L=1.5$) improves the accuracy over these methods too.
\par In short, we can not only control the degree of posterior collapse but also find adequate parameters of inverse Lipschitzness that improve the performance as well.
\begin{figure}
\centering
    \begin{subfigure}{0.15\textwidth}
            \includegraphics[width=\textwidth]{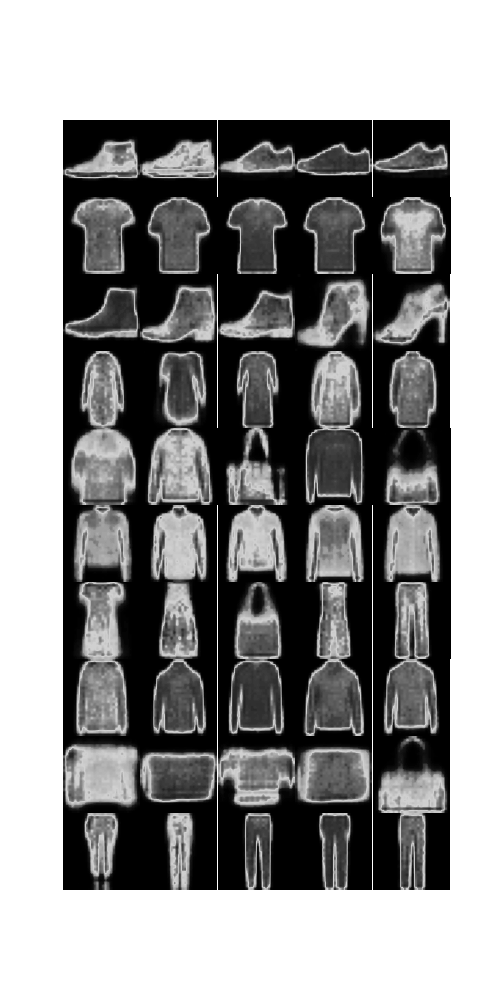}
            \caption{$L_1=L_2=0.0$}
            \label{fig:s00}
    \end{subfigure}
        \begin{subfigure}{0.15\textwidth}
            \includegraphics[width=\textwidth]{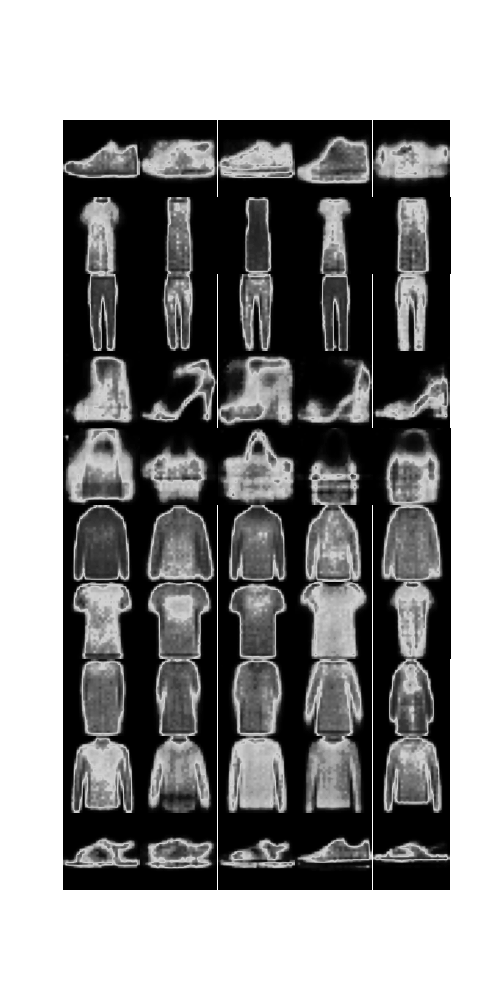}
            \caption{$L_1=L_2=1.5$}
            \label{fig:s15}
    \end{subfigure}
        \begin{subfigure}{0.15\textwidth}
            \includegraphics[width=\textwidth]{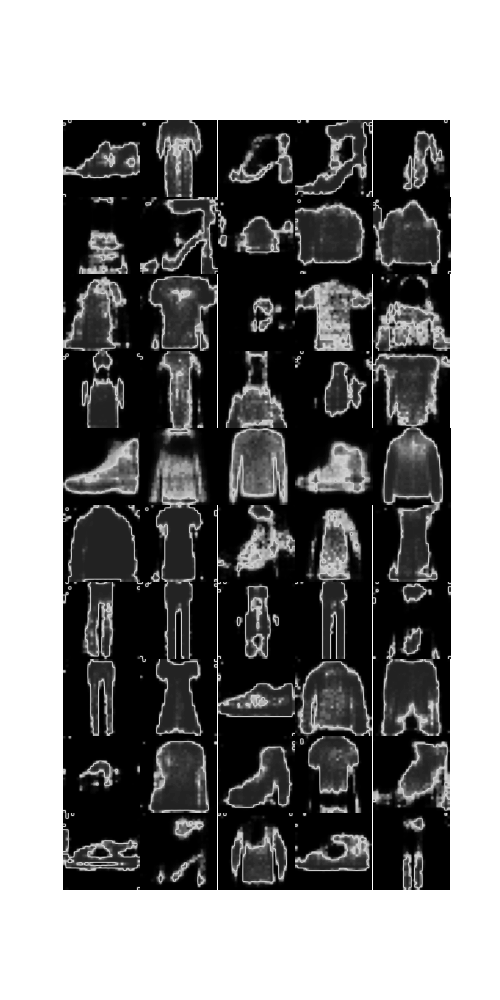}
            \caption{$L_1=L_2=5.0$}
            \label{fig:s50}
    \end{subfigure}
\caption{Samples of Fashion-MNIST data generated with different inverse Lipschitz parameters of IL-LIDMVAE with $c=10$, and all distributions were Gaussian. Each row corresponds to a different category. With $L_1=L_2=1.5$, we obtain the ten true classes with varied images.}
\label{fig:sfash}
\end{figure}
\begin{table}[t]
\caption{Results for synthetic text data. $\beta$ was set to 0.2 for $\beta$-VAE. The column entitled $L$ refers to the inverse Lipschitz constant of $f_\theta^{(1)}$ and $f_\theta^{(2)}$. NLL stands for negative log-likelihood. See Table \ref{tab:apsyn} for more data.}
\label{tab:syn}
\begin{center}
\begin{small}
\begin{sc}
\begin{tabular}{lccc}
\toprule
Model & $L$ & NLL & KL \\
\midrule
VAE   & - & 42.56 & 0.01  \\
$\beta$-VAE ($\beta=0.2$) & - & 42.34 & 0.08\\
Lagging VAE & - & 45.44 & 2.13\\
LIDSVAE & 0    & 56.67 & 0.24 \\
IL-LIDSVAE &0.5  & \textbf{40.48} & 0.60  \\
IL-LIDSVAE &1.5  & 44.50 & 3.80  \\
IL-LIDSVAE &5.0  & 52.34 & 8.13  \\
\ \ +annealing & - & 39.6 & 0.38\\
\bottomrule
\end{tabular}
\end{sc}
\end{small}
\end{center}
\end{table}
\begin{table}[t]
\caption{Results for Fashion-MNIST (Fashion) and Omniglot. The column entitled $L$ refers to the inverse Lipschitz constant of $f_\theta^{(1)}$ and $f_\theta^{(2)}$. NLL stands for negative log-likelihood. $^\ast$ means that the result is cited from \citet{WBC2021}. See Table \ref{tab:apfash} for more data.}
\label{tab:fash}
\vskip 0.15in
\begin{center}
\begin{small}
\begin{sc}
\begin{tabular}{lcccccc}
\toprule
& &\multicolumn{2}{c}{Fashion} & \multicolumn{2}{c}{Omniglot}\\ 
Model & $L$ & NLL & KL & NLL & KL \\
\midrule
VAE$^\ast$ & - & 258.8 & 0.2 & - & - \\
SA-VAE$^\ast$ & - & 252.2 & 0.3 & - & -\\
Lagging VAE$^\ast$ & - & 248.5 & 0.6& -&-\\
$\beta$-VAE$^\ast$ ($\beta=0.2$) & - & 245.3 & 1.2& - & -\\ 
LIDMVAE   & 0 & 237.3 & 9.5 & 135.0 & 15.8\\
IL-LIDMVAE & 0.5  & 240.3 & 10.0  & 129.9 & 20.7\\
IL-LIDMVAE & 1.5  & \textbf{234.4} & 11.0 & \textbf{126.5}& 25.2 \\
IL-LIDMVAE & 5.0  & 243.7 & 14.3 & 128.4 & 26.2 \\
\ \ +annealing & - &  235.6 & 8.1 & 117.7 & 26.0\\ 
\bottomrule
\end{tabular}
\end{sc}
\end{small}
\end{center}
\vskip -0.1in
\end{table}
\subsection{Annealing Method}
An interesting feature of our method is that we can easily consider an annealing scheme that relaxes the inverse Lipschitz constraint when the optimization comes at its boundary (i.e., when the inverse Lipschitz constant of the Brenier map of ICNN becomes close to the set value). That way, we can avoid tuning the inverse Lipschitz constant by hand. Results of this approach are shown in Table \ref{tab:syn} and \ref{tab:fash}, where the negative log-likelihoods are mainly better than results with constant inverse Lipschitz parameters. We only calculated the inverse Lipschitz constant of the first layer $f_\theta^{(1)}$ and estimated it by recycling inputs and outputs acquired during the training, which does not considerably increase the computational complexity. The decrease rate of the inverse Lipschitz constant was set to 0.85. Although we did not evaluate quantitatively, we observed that the final value of the parameter remained around that we found with the best negative log-likelihood without annealing.

\section{Conclusion}
\label{sec6}

In conclusion, starting from the recent observation that posterior collapse and latent variable non-identifiability are related to each other, we investigated a method that can guarantee to mitigate any degree of posterior collapse engendered from the formulation of the model itself. This was achieved by introducing an inverse Lipschitz neural network into the decoder that can freely control the degree of latent variable identifiability, and thus that of posterior collapse. Indeed, we theoretically proved that, as this constant increases, the posterior is moved away from the prior in terms of the relative Fisher information divergence in a non-decreasing manner. Based on this theoretical guarantee, we expanded our method to the algorithm IL-LIDVAE and its variants, which are applicable to a broad range of problem settings and can be easily tuned to avoid posterior collapse. We applied them to synthetic and real-world data and showed that they could clearly control the discrepancy between the posterior and the prior in agreement with the theoretical analysis, which was never explicitly realized in any prior work. In most cases, this also had the effect of finding better local minima with lower loss than that of the vanilla VAE.

\section*{Acknowledgements}

We thank anonymous reviewers for their valuable feedback and advice. KS has been supported in part by JST CREST JPMJCR2015 and JSPS Grant-in-Aid for Transformative Research Areas (A) 22H05106.

\bibliography{biblio}
\bibliographystyle{icml2023}

\newpage
\appendix
\onecolumn
\section{Proof of Theorems and Propositions}
\label{apa}

In this appendix, we will prove the theorems stated in the main paper.
\subsection{Proof of Theorem~\ref{th1}}
\label{sec:proof_main1}
Let us first state some simple lemmas to prove Theorem~\ref{th1}. While the proofs may seem simple, we will still show most of them in order to keep our paper self-contained.
\begin{lemma}\label{l1}
The following holds between the prior $p(z)$, the posterior $p_\theta(z\mid x)$, the likelihood $p_\theta(x\mid z)$ and the marginal likelihood $p_\theta(x)$:
\[
\|\nabla_z \log p_\theta(z\mid x)-\nabla_z \log p(z)\|=\|\nabla_z \log p_\theta(x\mid z)\|.
\]
\end{lemma}
\begin{proof}
Bayes' theorem implies 
\[
p_\theta(x\mid z)=\frac{p_\theta(x,z)}{p(z)}=\frac{p_\theta(z\mid x)p_\theta(x)}{p(z)}.
\]
Taking logarithms on both sides,
\[
\log p_\theta(x\mid z)=\log p_\theta(z\mid x)+\log p_\theta(x) -\log p(z),
\]
which leads to 
\[
\log p_\theta(x\mid z)-\log p_\theta(x)=\log p_\theta(z\mid x)-\log p(z).
\]
Now, differentiating with respect to $z$ and taking the norm gives the desired equality since $p_\theta(x)$ is independent of $z$.
\end{proof}
\begin{remark}
Note this theorem does not need any specification on the class of distributions.
\end{remark}
The following lemma is a fundamental property of the exponential family.
\begin{lemma}\label{l2}
    The following holds for the exponential family between the sufficient statistic and log-partition function:
    \[
    \mathbb{E}_{x}[T(x)]=\nabla_z A(z).
    \]
\end{lemma}
Furthermore, the generation of inverse-Lipschitz functions by Brenier maps implies the following property.
\begin{lemma}\label{l3}
    If $f:\mathbb{R}^l\to\mathbb{R}^l$ is an $L$-inverse Lipschitz function generated by an $L$-strongly convex real-valued function $B$, then the following holds:
    \[
    \nabla f(x)\succeq LI_l,
    \]
    where $I_l$ is the $l\times l$ identity matrix, and $A\succeq B$ means that $A-B$ is positive semi-definite.
\end{lemma}
\begin{proof}
    Since $f=\nabla B$ and $\nabla f=\nabla^2 B$, by definition of $L$-strong convexity, we immediately get $\nabla f(x)\succeq LI_l$.
\end{proof}
\begin{remark}
    This statement can be proved thanks to the use of Brenier maps in order to create inverse Lipschitz functions.
\end{remark}
Finally, we can prove Theorem~\ref{th1}.
\begin{theoremr*}\label{ath1}
Under model \eqref{model}, Assumption~\ref{as1} and $l=t$, the following holds for all $i$ and $\theta\in\Theta_L$:
\begin{align}\label{aeq25}
    F(p(z)\mid\mid p_\theta(z\mid x_i))\ge& L^2\int\|T(x_i)-\E_{p_\theta(x\mid z)}[T(x)]\|^2 p(z)\d z.
\end{align}
\end{theoremr*}
\begin{proof}
    Since
    \[
    F( p(z)\mid\mid p_\theta(z\mid x))=\int\|\nabla_z \log p_\theta(z\mid x)-\nabla_z \log p(z) \|^2p(z)\d z,
    \]
    Previous lemmas imply
    \begin{align*}
    F( p(z)\mid\mid p_\theta(z\mid x))=&\int\|\nabla_z\log p_\theta(x\mid z) \|^2p(z)\d z\\
    =& \int\|\nabla_z\left(\log h(x)+ f_\theta(z)^\top T(x) -A(f_\theta(z))\right) \|^2p(z)\d z\\
    =& \int\|  \nabla_z f_\theta(z)^\top T(x)-\nabla_z A(f_\theta(z)) \|^2p(z)\d z\\
    =& \int\| \nabla_z f_\theta(z)^\top T(x) -\nabla_z f_\theta(z)^\top \nabla A(f_\theta(z))\|^2p(z)\d z\\
    =& \int\| \nabla_z f_\theta(z)^\top\left(T(x)-\nabla A(f_\theta(z))\right)\|^2p(z)\d z\\
    =& \int\left(T(x)-\nabla A(f_\theta(z))\right)^\top\nabla_z f_\theta(z) \nabla_z f_\theta(z)^\top\left(T(x)-\nabla A(f_\theta(z))\right)p(z)\d z\\
    \ge & L^2\int\|T(x)-\nabla A(f_\theta(z))\|^2p(z)\d z\\
    =& L^2\int\|T(x)-\E_{p_\theta(x\mid z)}[T(x)]\|^2p(z)\d z.
    \end{align*}
    We used Lemma \ref{l1} for the first equality, Lemma \ref{l3} for the inequality, and Lemma \ref{l2} for the last equality. 
\end{proof}
\begin{corr*}\label{acor1}
Under model~\eqref{model}, Assumption~\ref{as1} and $l=t$,
\begin{align}
    F( p(z)\mid\mid p_\theta(z\mid x_i))\ge L^2\inf_{\theta\in\Theta_L}\left\{\int \|T(x_i)-\E_{p_\theta(x\mid z)}[T(x)]\|^2 p(z)\d z\right\}\nonumber.
\end{align}
Therefore, the lower bound is non-decreasing in terms of L. Moreover, if the infinimum of this lower bound is attained by a parameter $\theta\in\Theta_L$ and $p(z)$ has a positive variance, then the lower bound is monotonically increasing in terms of L.
\end{corr*}
\begin{proof}
The infimum immediately follows from equation \eqref{aeq25} which holds for all $\theta\in \Theta_L$. Now, since if $L\ge L'$, then $\Theta_L\supset \Theta_L'$ by definition of inverse Lipschitzness, the infimum is non-decreasing in terms of $L$.
\par Concerning the second half of the statement, let us suppose that the infinimum can be attained by a parameter $\theta\in\Theta_L$ and that $p(z)$ has a positive variance. It is enough to show that the infinimum is not 0, or in other words, that the lower bound is not vacuous. However if 
\[
\int \|T(x_i)-\E_{p_\theta(x\mid z)}[T(x)]\|^2 p(z)\d z=0,
\]
this means the random variable $\E_{p_\theta(x\mid z)}[T(x)]$ satisfies $\E_{p_\theta(x\mid z)}[T(x)]=T(x_i)$ for all $z$, which implies $\E_{p_\theta(x\mid z)}[T(x)]$ is constant. This is only possible if $\E_{p_\theta(x\mid z)}[T(x)]$ does not depend on $z$, which means $p_\theta(x\mid z)$ is independent of $z$. This contradicts our definition of $f_\theta$ which is injective. Therefore, the infimum is positive, and the lower bound is, in consequence, increasing in terms of $L$.
\end{proof}
\subsection{Proof of Theorem \ref{th2}}
\label{sec:proof_main2}
\begin{theoremr2*}\label{ath2}
Under model~\eqref{model}, Assumption~\ref{as1} and $l=t$, the following holds for all $\theta\in\Theta_L$:
\begin{align*}
    \bar{F}_\theta(\bm{x})\vcentcolon=\int\left\|\frac{1}{n
}\sum_{i=1}^n\nabla_z \log p_\theta(z\mid x_i)-\nabla_z \log p(z)\right\|^2p(z)\d z\ge L^2\int\left\|\frac{1}{n}\sum_{i=1}^nT(x_i)-\E_{p_\theta(x\mid z)}[T(x)]\right\|^2p(z)\d z.
\end{align*}
\end{theoremr2*}
\begin{proof}
    It suffices to note that 
\begin{align*}
        \frac{1}{n}\sum_{i=1}^n\nabla_z\log p_\theta(z\mid x_i)=&\frac{1}{n}\sum_{i=1}^n\nabla_z\left(\log h(x_i)+ f_\theta(z)^\top T(x_i) -A(f_\theta(z))\right)\\
        =&\nabla_z\left(f_\theta(z)^\top\frac{1}{n}\sum_{i=1}^n T(x_i) -A(f_\theta(z))\right).
        \qedhere
\end{align*}
\end{proof}
\subsection{Proof of Theorem~\ref{th3}}\label{apa4}
This subsection considers IL-LIDVAE in the general case where $t >l$, i.e., the latent dimension $l$ is smaller than that of the sufficient statistic of $x$ (Definition \ref{def:IL-LIDVAE}), and discusses the lower bound of the relative Fisher information divergence between the prior and posterior. 

Let us first remind Assumption \ref{as1} and the problem setting when $t> l$. $\mathrm{EF}(x\mid \xi)$ is an exponential family defined by 
\begin{equation*}
\mathrm{EF}_T(x\mid \xi)=\exp\{ T(x)^\top \xi - A(\xi)\}h(x),
\end{equation*}
where $T(x)=(T_1(x),\ldots,T_t(x))$ is a set of sufficient statistics, $\xi$ a natural parameter, and $h(x)$ a base probability density. We assume $h(x)>0$ and that the natural parameter $\xi$ is defined on an open set in $\mathbb{R}^l$. The IL-LIDVAE model is defined by $p_\theta(x\mid z)=\mathrm{EF}_T(x\mid f_\theta (z))$, where
\[
f_\theta(z) = f_\theta^{(2)}(B^\top f_\theta^{(1)}(z))
\]
which is given by inverse Lipschitz functions $f_\theta^{(1)}$ and $f_\theta^{(2)}$ with constant $L_1$ and $L_2$, respectively. The model can be regarded as a curved exponential family parameterized with $f_\theta(z)$.

We make a natural assumption on the exponential family.
\begin{assumption}\label{assump:diffeo}
The mapping from the natural parameter to the expectation parameter 
\begin{equation*}
\xi \mapsto \nabla_\xi A(\xi)=\E_{\mathrm{EF}_T(x\mid \xi)}[T(x)]
\end{equation*}
is a diffeomorphism.  
\end{assumption}
In light of the relation
\[
\nabla_\xi \log \mathrm{EF}_T(x\mid \xi) = T(x) - \nabla_\xi A(\xi) = T(x) -\E_{\mathrm{EF}_T(x\mid \xi)}[T(x)],
\]
the above assumption means that the natural parameter $\xi$ effectively changes the density function for any direction of the parameter space. This holds for many popular exponential families, such as Gaussian distributions. 
\par Under this problem setting, we can prove Theorem~\ref{th3}, which is restated more explicitly.
\begin{theorem}[Theorem~\ref{th3} restated]
Under Assumptions \ref{as1} and \ref{assump:diffeo}, and $l=t$,
\[
F(p(z)\mid\mid p_\theta(z\mid x))\geq L_1^2\inf_{\theta\in\Theta_{L_1,L_2}}\left\{\int \left\Vert \bigl(T(x)-\E_{p_\theta(x\mid z)}[T(x)]\bigr)^\top  \nabla f_\theta^{(2)}\bigl(B^\top f_\theta^{(1)}(z)\bigr) B^\top \right\Vert^2 p(z) dz\right\},
\]
where the right-hand side of the inequality is increasing in terms of $L_1$.
\end{theorem}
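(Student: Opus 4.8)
The plan is to reduce the relative Fisher information divergence to the conditional score $\nabla_z\log p_\theta(x\mid z)$, to expand that score by the chain rule applied to the composition $f_\theta(z)=f_\theta^{(2)}(\beta^\top f_\theta^{(1)}(z))$, and then to peel off the factor $L_1^2$ using only the inverse-Lipschitz property of $f_\theta^{(1)}$, keeping $\nabla f_\theta^{(2)}$ and $\beta^\top$ inside the norm. First I would apply Bayes' rule $p_\theta(z\mid x)=p_\theta(x\mid z)\,p(z)/p_\theta(x)$: since the evidence $p_\theta(x)$ does not depend on $z$, differentiating the log-density in $z$ gives $\nabla_z\log p_\theta(z\mid x)-\nabla_z\log p(z)=\nabla_z\log p_\theta(x\mid z)$, so that, by the definition of the relative Fisher information divergence,
\[
F\bigl(p_\theta(z\mid x)\,\|\,p(z)\bigr)=\int\bigl\|\nabla_z\log p_\theta(x\mid z)\bigr\|^2\,p(z)\,dz .
\]

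Next I would compute the conditional score. From $\log p_\theta(x\mid z)=T(x)^\top f_\theta(z)-A(f_\theta(z))+\log h(x)$ and the exponential-family identity $\nabla_\xi A(\xi)=\E_{\mathrm{EF}(x\mid\xi)}[T(x)]$---whose differentiability, together with strict convexity of $A$ and non-degeneracy of the parameterisation, is what Assumption~\ref{assump:diffeo} supplies---the chain rule for $f_\theta(z)=f_\theta^{(2)}(\beta^\top f_\theta^{(1)}(z))$ yields
\[
\nabla_z\log p_\theta(x\mid z)=\bigl(\nabla f_\theta^{(1)}(z)\bigr)^\top\beta\,\bigl(\nabla f_\theta^{(2)}(\beta^\top f_\theta^{(1)}(z))\bigr)^\top\bigl(T(x)-\E_{p_\theta(x\mid z)}[T(x)]\bigr).
\]
Because $f_\theta^{(1)}$ is inverse-Lipschitz with constant $L_1$ and maps between spaces of the same dimension, its Jacobian $\nabla f_\theta^{(1)}(z)$ is a square matrix all of whose singular values are at least $L_1$, hence $\|(\nabla f_\theta^{(1)}(z))^\top w\|\ge L_1\|w\|$ for every $w$. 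Applying this with $w=\beta\bigl(\nabla f_\theta^{(2)}(\beta^\top f_\theta^{(1)}(z))\bigr)^\top\bigl(T(x)-\E_{p_\theta(x\mid z)}[T(x)]\bigr)$, using that a column vector and its transpose have the same Euclidean norm, and squaring, I obtain, pointwise in $z$,
\[
\bigl\|\nabla_z\log p_\theta(x\mid z)\bigr\|^2\ge L_1^2\,\bigl\|\bigl(T(x)-\E_{p_\theta(x\mid z)}[T(x)]\bigr)^\top\nabla f_\theta^{(2)}\bigl(\beta^\top f_\theta^{(1)}(z)\bigr)\,\beta^\top\bigr\|^2 .
\]
Integrating against $p(z)$, substituting into the first display, and bounding the resulting $\theta$-dependent integral from below by its infimum over $\theta\in\Theta_{L_1,L_2}$ (legitimate since the given $\theta$ already lies in $\Theta_{L_1,L_2}$) yields the asserted inequality.

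For the monotonicity claim I would observe that the feasible set $\Theta_{L_1,L_2}$ is nonincreasing in $L_1$: the constraint $\|f_\theta^{(1)}(x)-f_\theta^{(1)}(y)\|\ge L_1\|x-y\|$ becomes strictly more stringent as $L_1$ increases, so a larger $L_1$ can only shrink it. Hence $\inf_{\theta\in\Theta_{L_1,L_2}}$ of the integral is nondecreasing in $L_1$, and multiplying by the increasing nonnegative factor $L_1^2$ keeps the whole right-hand side nondecreasing in $L_1$.

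I expect the main obstacle to be the step that extracts $L_1^2$: one can pull a clean scalar out of $f_\theta^{(1)}$, but \emph{not} out of the pair $\beta^\top$ and $\nabla f_\theta^{(2)}$, precisely because when $t\ge l$ the composite linear map $v\mapsto\beta\,\bigl(\nabla f_\theta^{(2)}(u)\bigr)^\top v$ sends $\mathbb{R}^t$ down to $\mathbb{R}^l$ and is therefore rank-deficient as soon as $t>l$, so it has a nontrivial kernel and admits no lower bound of the form $\|\beta\,(\nabla f_\theta^{(2)}(u))^\top v\|\ge c\|v\|$; this is exactly why the case $t\ge l$ must be handled separately from $t=l$, and why $L_2$ influences the bound only through the constraint set $\Theta_{L_1,L_2}$. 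A secondary care-point is that the singular-value inequality must be invoked for the \emph{square} Jacobian $\nabla f_\theta^{(1)}(z)$, so that $\|(\nabla f_\theta^{(1)}(z))^\top w\|\ge L_1\|w\|$ holds for all $w$ and not only for $w$ in the range of $\nabla f_\theta^{(1)}(z)$; one should also confirm that differentiating under the integral sign in $p_\theta(x)=\int p_\theta(x\mid z)p(z)\,dz$ is permitted by the standing regularity assumptions.
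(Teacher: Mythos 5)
Your proposal is correct and follows essentially the same route as the paper: reduce the relative Fisher information divergence to $\int\Vert\nabla_z\log p_\theta(x\mid z)\Vert^2 p(z)\,dz$, expand the score by the chain rule through $f_\theta^{(2)}(\beta^\top f_\theta^{(1)}(z))$, and extract $L_1^2$ from the square Jacobian of $f_\theta^{(1)}$ via its inverse-Lipschitz (minimum-singular-value) property before passing to the infimum over $\Theta_{L_1,L_2}$. You additionally make explicit the monotonicity-in-$L_1$ argument (shrinking feasible set, hence nondecreasing infimum, times the increasing factor $L_1^2$), which the paper asserts in the statement but does not spell out.
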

\begin{proof}
As in the proof of Theorem \ref{th1}, the relative Fisher information divergence $F(p(z)\mid\mid p_\theta(z\mid x))$ is lower bounded by 
\begin{align*}
F(p(z)\mid\mid p_\theta(z\mid x)) & = \int \Vert \nabla_z \log p_\theta(x\mid z) \Vert^2 p(z) dz \\
& = \int \Vert \nabla_\xi \log \mathrm{EF}_T(x\mid \xi)|_{\xi=f_\theta(z)} \nabla_z f_\theta(z) \Vert^2 p(z) dz \\
& = \int \left\Vert \bigl(T(x)-\E_{p_\theta(x\mid z)}[T(x)]\bigr)^\top  \nabla f_\theta^{(2)}\bigl(B^\top f_\theta^{(1)}(z)\bigr) B^\top \nabla f_\theta^{(1)}(z)) \right\Vert^2 p(z) dz \\
& \geq L_1^2 \int \left\Vert \bigl(T(x)-E_{p_\theta(x\mid z)}[T(x)]\bigr)^\top  \nabla f_\theta^{(2)}\bigl(B^\top f_\theta^{(1)}(z)\bigr) B^\top \right\Vert^2 p(z) dz.
\end{align*}

To guarantee the lower boundedness of the divergence with the control by the inverse-Lipschitz constant $L_1$, the integral in the last line should be positive. We consider this positiveness under the assumption that the density of the prior $p(z)$ is everywhere positive and continuous. In this case, the integral is positive if and only if the latent space has an open set on which  
\[
\bigl(T(x)-\E_{p_\theta(x\mid z)}[T(x)]\bigr)^\top  \nabla f_\theta^{(2)}\bigl(B^\top f_\theta^{(1)}(z)\bigr) B^\top  \neq 0.
\] 
Because $\nabla f_\theta^{(1)}$ is invertible, 
this is equivalent to 
\[
\nabla_z \log p_\theta(x\mid z) \neq 0
\]
on that open set. This holds under the assumption because the parameter $\xi=f_\theta(z)$ moves $t$-dimensional directions as $z$ changes, and it in turns changes $\log p_\theta(x\mid z)$ by Assumption \ref{assump:diffeo}. This implies the desired result.
\end{proof}
\subsection{Proof of Proposition \ref{prop1}}
\label{sec:proof_prop}
\begin{propositionr*}
Suppose that the lower bound of Fisher divergence $F(p(x) \mid\mid q(x)) \geq \varepsilon$ holds for any small perturbations of $p$ and $q$ to some extent. More precisely, let $p_t$ (or $q_t$) denote the convolution between $p$ ($q$, resp.) and $N(0,t)$. Assume that there is $\delta > 0$ such that $F(p_t \mid\mid q_t) \ge \epsilon$ for any $t\in[0,\delta]$. Then, the bound $D(p \mid\mid q) \ge \frac{1}{2}\delta \epsilon$ holds.
\end{propositionr*}
\begin{proof}
This is a straightforward consequence of the well-known de Bruijn's identity \citep{L2009} about the relation between KL and Fisher divergences: $\frac{\d}{\d t} D(p_t \mid\mid  q_t ) = -\frac{1}{2} F(p_t \mid\mid  q_t)$. By integrating both sides, we obtain $D(p_\delta \mid\mid  q_\delta) - D(p\mid\mid q) = -\frac{1}{2} \int_0^\delta F(p_t \mid\mid  q_t)\d t$, and this implies $D(p\mid\mid q) \geq \frac{1}{2} \int_0^\delta F(p_t \mid\mid  q_t)dt \ge \frac{1}{2}\delta\epsilon$, which concludes the assertion.
\end{proof}

\section{Details of Experiments}
\label{apb}

In this appendix, we provide further details on experiments conducted in this paper.
\subsection{Details of Data and Experiments}
\begin{table}[t]
\caption{Details of Experiments}
\label{tab:apexp}
\vskip 0.15in
\begin{center}
\begin{small}
\begin{sc}
\begin{tabular}{lcccc}
\toprule
Data & Model & Decoder & Latent Dimension & Size of hidden layers\\
\midrule
Toy   &IL-LIDMVAE ($c=2$) & Gaussian & 2  & 10 \\
Fashion-MNIST &IL-LIDMVAE ($c=10$) & RealNVP (2 layers) & 64 & 512 \\
Omniglot & IL-LIDMVAE ($c=50$)& RealNVP (2 layers)  & 32 & 200\\
Synthetic & IL-LIDSVAE    & LSTM (2 layers) & 1024 & 1024\\
\bottomrule
\end{tabular}
\end{sc}
\end{small}
\end{center}
\vskip -0.1in
\end{table}
\paragraph{Image}
For image data sets, we used the IL-LIDMVAE (Definition~\ref{ilm}) with the exponential family defined as Gaussian distribution. For all other methods, we used their Gaussian mixture variant as GMVAE for VAE. The number of categories $c$ was set to the true number of classes of the data. Fashion-MNIST contains 10 classes (T-shirt/top, trouser, pullover, dress, coat, sandal, shirt, sneaker, bag and ankle boot), and Omniglot 50. We parameterized the prior distribution as well. The two Lipschitz constants $L_1$ and $L_2$ were equal throughout the experiments. Further details can be found in Table \ref{tab:apexp}.
\paragraph{Text}
For the text data, we used IL-LIDSVAE (Definition \ref{ils}). The synthetic data set was generated from a two-layer sequential VAE with five-dimensional latent variables by~\citet{WBC2021}. The two Lipschitz constants $L_1$ and $L_2$ were equal throughout the experiments. Further details can be found in Table \ref{tab:apexp}.
\subsection{Further Results of Experiments}
\subsubsection{Toy data}
We show in Figure \ref{fig:aptoy} the posterior of some trained models.
\begin{figure}
\vskip 0.2in
\centering
    \begin{subfigure}{0.3\textwidth}
            \includegraphics[width=\textwidth]{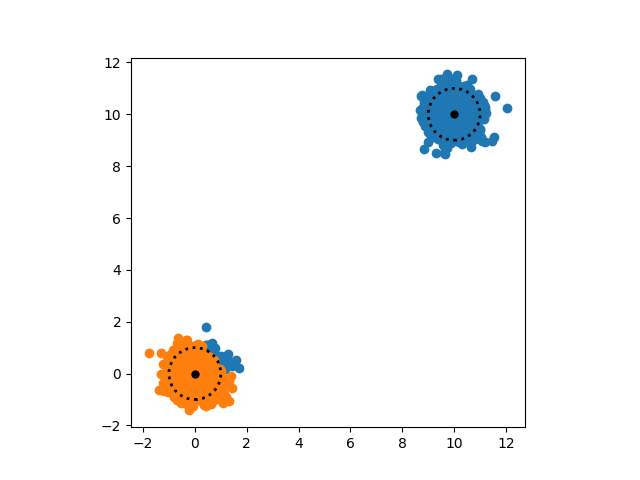}
            \caption{VAE}
            \label{fig:toy05vae}
    \end{subfigure}
    \begin{subfigure}{0.3\textwidth}
            \includegraphics[width=\textwidth]{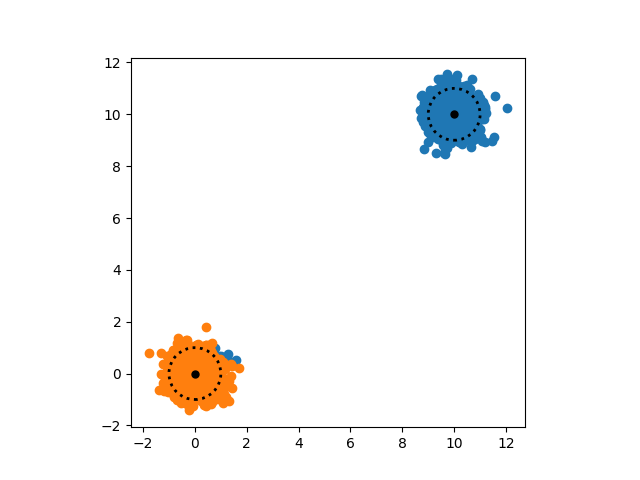}
            \caption{LIDVAE ($L_1=L_2=0$)}
            \label{fig:toy05idvae}
    \end{subfigure}
        \begin{subfigure}{0.3\textwidth}
            \includegraphics[width=\textwidth]{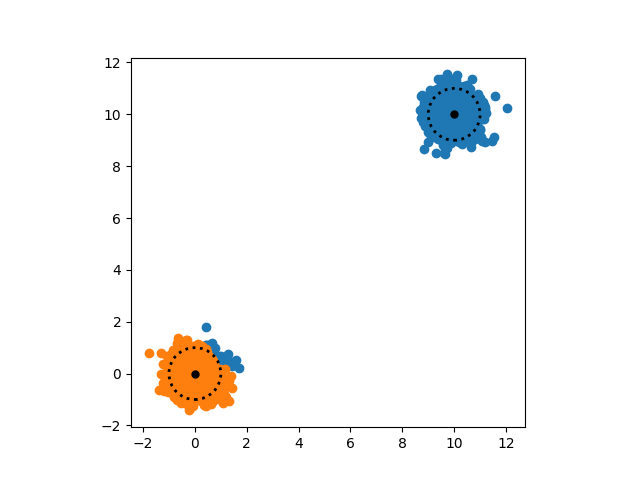}
            \caption{IL-LIDVAE ($L_1=L_2=5.0$)}
            \label{fig:toy05ilvae}
    \end{subfigure}\\
        \begin{subfigure}{0.3\textwidth}
            \includegraphics[width=\textwidth]{out255_toy27.5_vae0.0001_1.5_1.5_r199toy_pointswc.png}
            \caption{VAE}
            \label{fig:toy7vae}
    \end{subfigure}
        \begin{subfigure}{0.3\textwidth}
            \includegraphics[width=\textwidth]{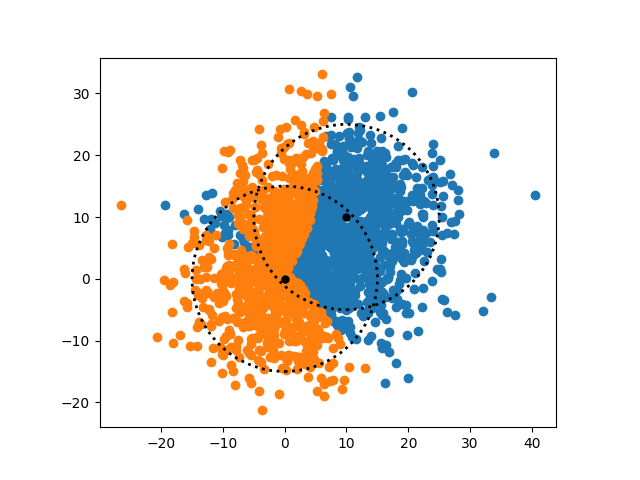}
            \caption{LIDVAE ($L_1=L_2=0$)}
            \label{fig:toy7idvae}
    \end{subfigure}
        \begin{subfigure}{0.3\textwidth}
            \includegraphics[width=\textwidth]{out255_toy27.5_idvae0.0001_5.0_5.0_r199toy_pointswc.png}
            \caption{IL-LIDVAE ($L_1=L_2=5.0$)}
            \label{fig:toy7ilvae}
    \end{subfigure}
\caption{Posterior of VAE (left), LIDVAE (middle) and IL-LIDVAE (right) for the toy data with different standard deviations. $\sigma=0.5$ (top) and $\sigma=7.5$ (bottom). The black points are the means of $N((0,0)^\top,\sigma^2I_2)$ and $N((10,10)^\top,\sigma^2 I_2)$, and the dashed circles delimit the $2\sigma$ area of each distributions.}
\label{fig:aptoy}
\vskip -0.2in
\end{figure}
All models perform well for moderate values of $\sigma$, but only IL-LIDVAE can adapt to the more extreme cases. Data were generated from $N((0,0)^\top,\sigma^2I_2)$ and $N((10,10)^\top,\sigma^2 I_2)$. Therefore, data should be separated along the mediator of the segment connecting $(0,0)^\top$ and $(10,10)^\top$. IL-LIDVAE with $L_1=L_2=5.0$ is the closest to this situation.
\subsubsection{Images}
We show in Table \ref{tab:apfash} additional data with different Lipschitz constants. The mutual information (MI) between the data and the latent variables \citep{HJ2016} and the percentage of active units (AU) \citep{BGS2015}, two alternatives measure of posterior collapse, are also presented in the table. For the percentage of active units, the threshold was set to 0.01.
\begin{table}[t]
\caption{Results for Fashion-MNIST (Fashion) and Omniglot. The column entitled $L$ refers to the inverse Lipschitz constant of $f_\theta^{(1)}$ and $f_\theta^{(2)}$. NLL stands for negative log-likelihood, MI for mutual information and AU for active units. $^\ast$ means that the result is cited from \citet{WBC2021}.}
\label{tab:apfash}
\vskip 0.15in
\begin{center}
\begin{small}
\begin{sc}
\begin{tabular}{lcccccccccc}
\toprule
& &\multicolumn{4}{c}{Fashion} & \multicolumn{4}{c}{Omniglot}\\ 
Model & $L$ &  NLL & KL & MI & AU & NLL & KL &MI & AU\\
\midrule
VAE$^\ast$ & - & 258.8 & 0.2 & 0.9 & 0.1 & - & - & - & -\\
SA-VAE$^\ast$ & - & 252.2 & 0.3 & 1.3 & 0.2 & - & -& - & -\\
Lagging VAE$^\ast$ & - & 248.5 & 0.6& 1.6 & 0.4&-&-& - & -\\
$\beta$-VAE$^\ast$ ($\beta=0.2$) & - & 245.3 & 1.2& 2.4 & 0.6 & - & -& - & -\\ 
LIDVAE   & 0 & 237.3 & 9.5 & 8,7 & 1.0 & 135.0 & 15.8 & 15.2 & 1.0\\
IL-LIDVAE & 0.5  & 240.3 & 10.0 & 9.4 & 1.0  & 129.9 & 20.7 & 20.2 & 1.0\\
IL-LIDVAE & 1.5  & \textbf{234.4} & 11.0 & 10.7 & 1.0 & 126.5 & 25.2 & 24.8 & 1.0 \\
IL-LIDVAE & 2.5  & 236.0 & 12.6 & 12.5& 1.0 & \textbf{126.2} & 25.9 & 25.3 & 1.0\\
IL-LIDVAE & 3.5  & 239.7 & 13.0& 12.9 &0.5 & 126.5 & 26.3 & 26.2 & 1.0\\
IL-LIDVAE & 4.5  & 241.7 & 13.2&13.2 &0.4 & 127.7 & 26.3 & 26.2 & 1.0\\
IL-LIDVAE & 5.0  & 243.7 & 14.3& 14.2 & 0.3 & 128.4 & 26.2 & 26.1 & 1.0\\
\ \ +annealing & - &  235.6 & 8.1 &7.8  & 1.0 & 117.7 & 26.0 &25.8 &1.0\\ 
\bottomrule
\end{tabular}
\end{sc}
\end{small}
\end{center}
\vskip -0.1in
\end{table}
We also show some reconstruction and sampling conducted with the trained models of IL-LIDVAE in Figures \ref{fig:aprecfash} and \ref{fig:apsamfash}, respectively.
\begin{figure}
\vskip 0.2in
\centering
    \begin{subfigure}{0.2\textwidth}
            \includegraphics[width=\textwidth]{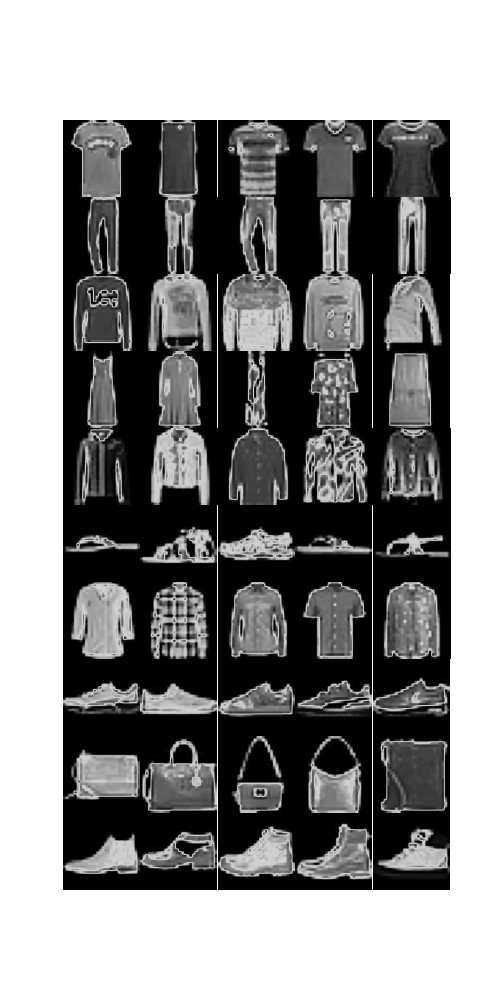}
            \caption{True Data}
            \label{fig:true}
    \end{subfigure}
    \begin{subfigure}{0.2\textwidth}
            \includegraphics[width=\textwidth]{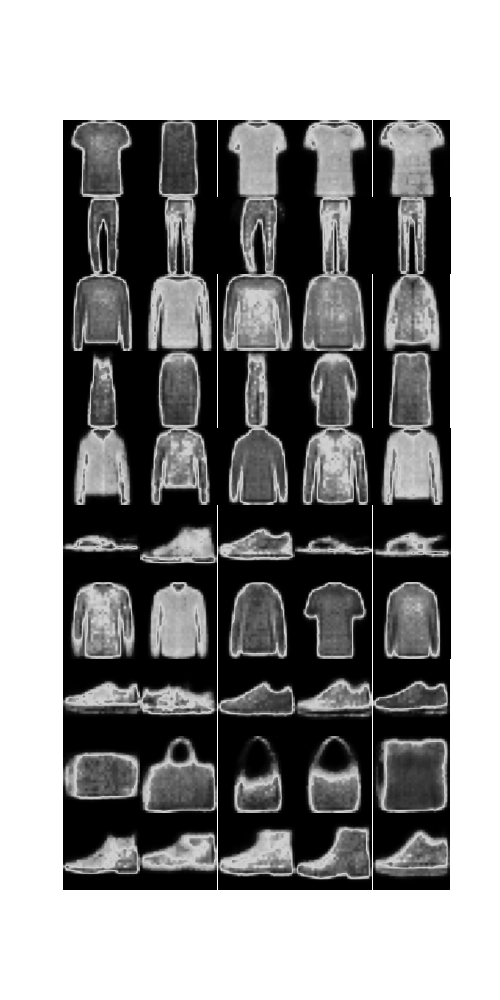}
            \caption{$L_1=L_2=0.0$}
            \label{fig:rec00}
    \end{subfigure}
    \begin{subfigure}{0.2\textwidth}
            \includegraphics[width=\textwidth]{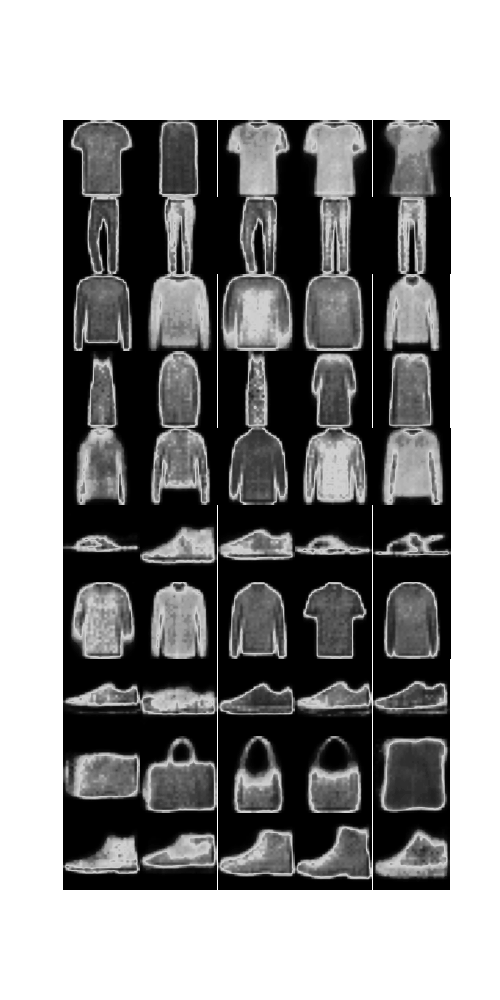}
            \caption{$L_1=L_2=0.5$}
            \label{fig:rec05}
    \end{subfigure}
        \begin{subfigure}{0.2\textwidth}
            \includegraphics[width=\textwidth]{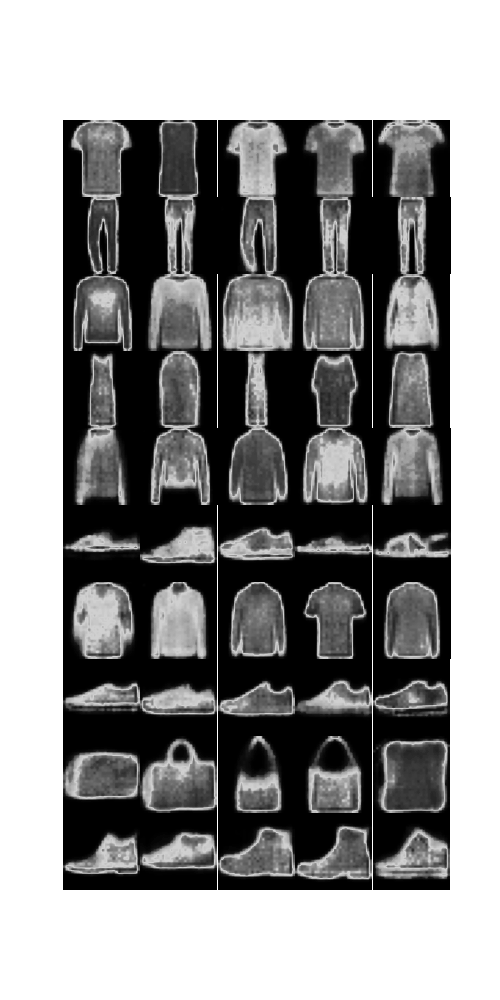}
            \caption{$L_1=L_2=1.5$}
            \label{fig:rec15}
    \end{subfigure}
        \begin{subfigure}{0.2\textwidth}
            \includegraphics[width=\textwidth]{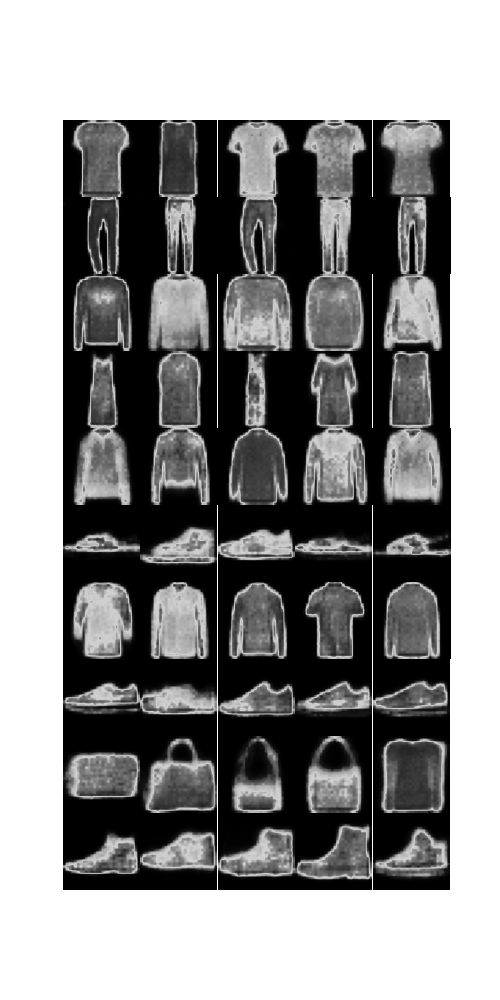}
            \caption{$L_1=L_2=2.5$}
            \label{fig:rec25}
    \end{subfigure}
        \begin{subfigure}{0.2\textwidth}
            \includegraphics[width=\textwidth]{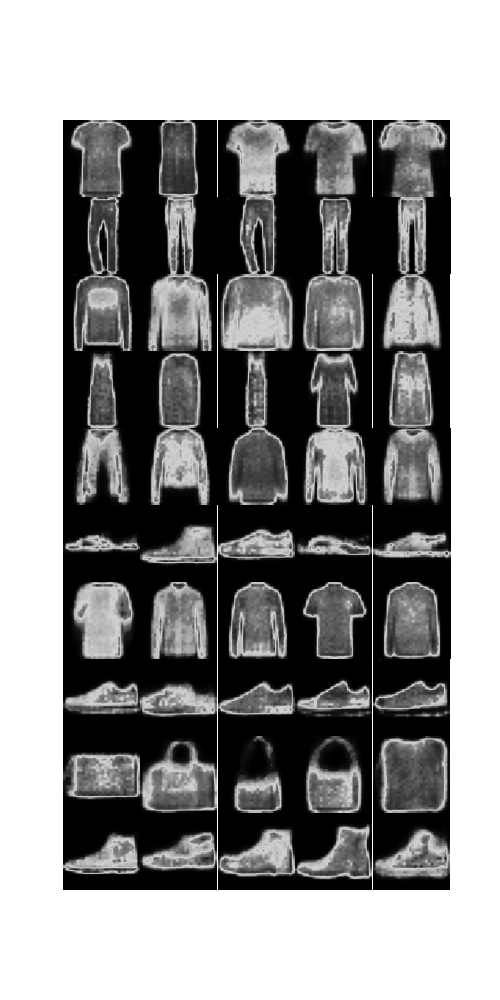}
            \caption{$L_1=L_2=3.5$}
            \label{fig:rec35}
    \end{subfigure}
        \begin{subfigure}{0.2\textwidth}
            \includegraphics[width=\textwidth]{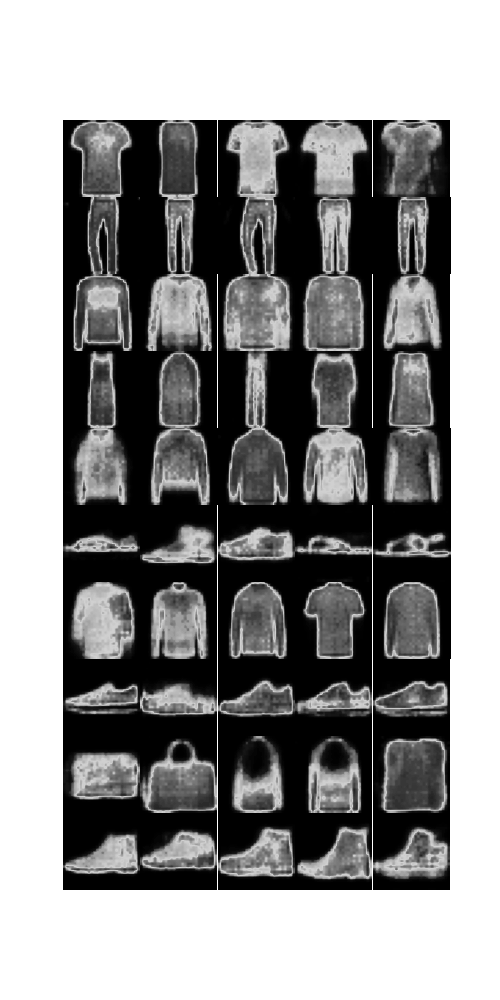}
            \caption{$L_1=L_2=4.5$}
            \label{fig:rec45}
    \end{subfigure}
        \begin{subfigure}{0.2\textwidth}
            \includegraphics[width=\textwidth]{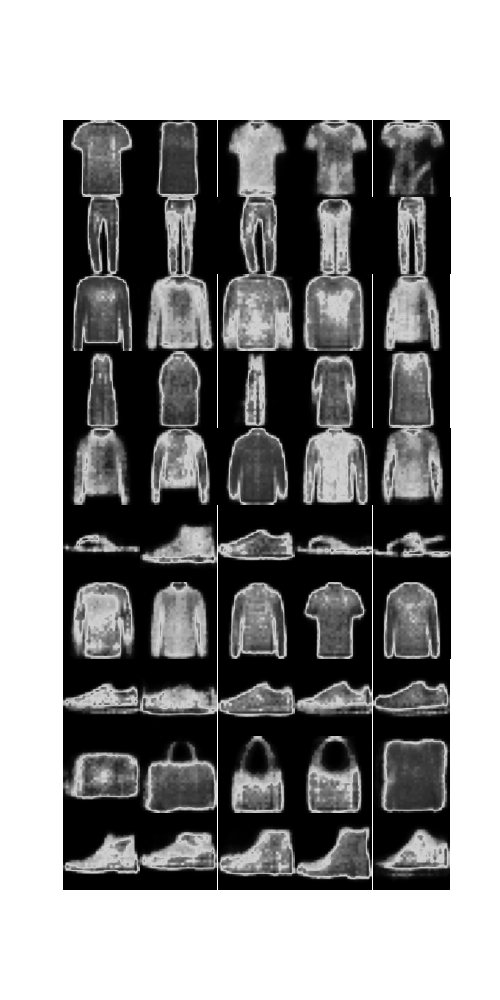}
            \caption{$L_1=L_2=5.0$}
            \label{fig:rec50}
    \end{subfigure}
\caption{Reconstruction of randomly chosen data of Fashion-MNIST for different inverse Lipschitz parameters of IL-LIDMVAE. The number of classes was set to $10$, and all distributions were Gaussian. Each row corresponds to a different category. }
\label{fig:aprecfash}
\vskip -0.2in
\end{figure}
\begin{figure}
\vskip 0.2in
\centering
    \begin{subfigure}{0.2\textwidth}
            \includegraphics[width=\textwidth]{out_fashionmnist_idvae0.0005_0.0_0.0_r10sample_summary.png}
            \caption{$L_1=L_2=0.0$}
            \label{fig:sam00}
    \end{subfigure}
    \begin{subfigure}{0.2\textwidth}
            \includegraphics[width=\textwidth]{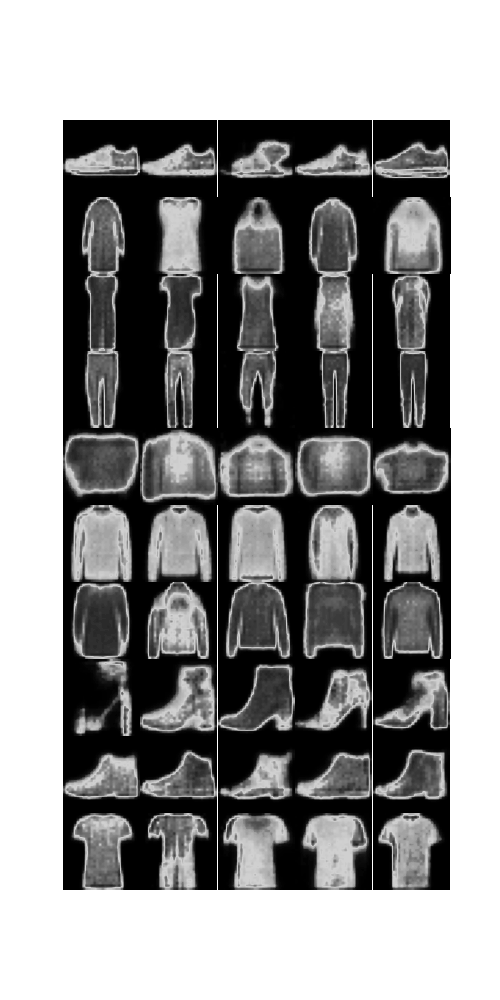}
            \caption{$L_1=L_2=0.5$}
            \label{fig:sam05}
    \end{subfigure}
        \begin{subfigure}{0.2\textwidth}
            \includegraphics[width=\textwidth]{out_fashionmnist_idvae0.0005_1.5_1.5_r10sample_summary.png}
            \caption{$L_1=L_2=1.5$}
            \label{fig:sam15}
    \end{subfigure}
        \begin{subfigure}{0.2\textwidth}
            \includegraphics[width=\textwidth]{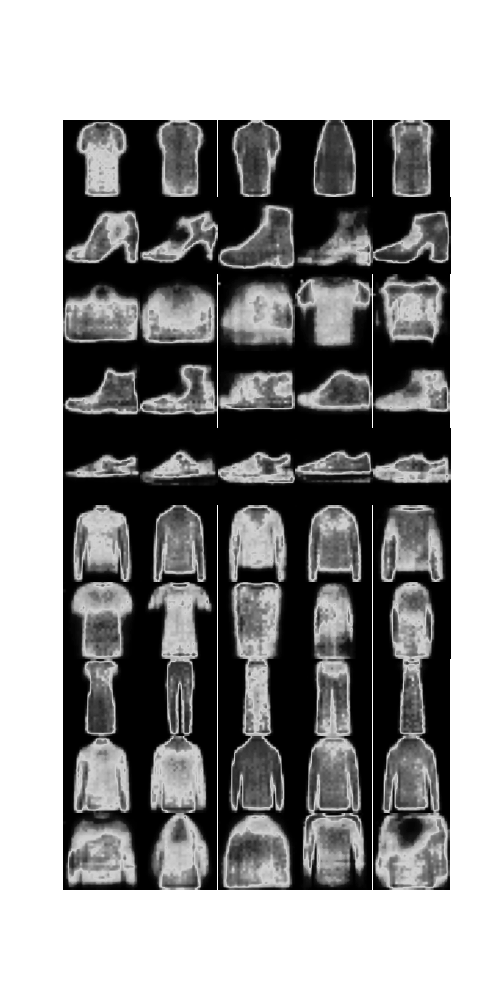}
            \caption{$L_1=L_2=2.5$}
            \label{fig:sam25}
    \end{subfigure}
        \begin{subfigure}{0.2\textwidth}
            \includegraphics[width=\textwidth]{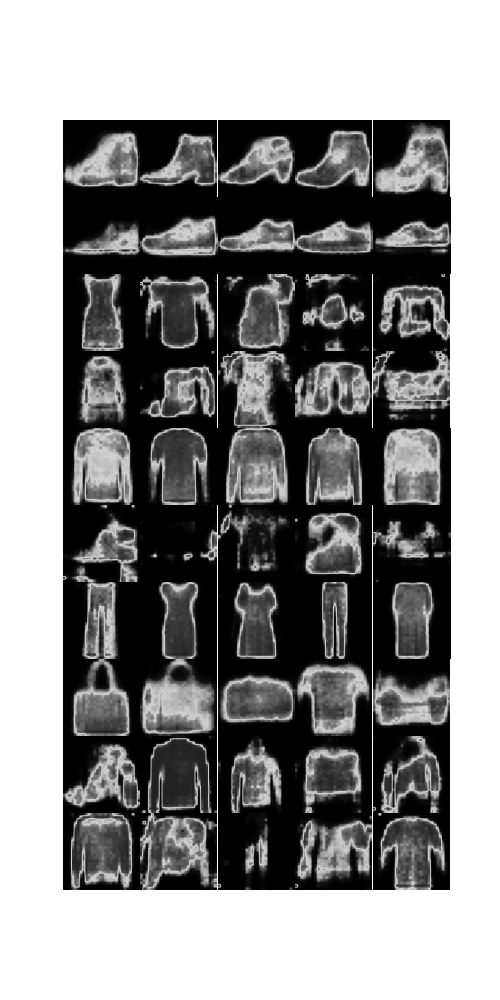}
            \caption{$L_1=L_2=3.5$}
            \label{fig:sam35}
    \end{subfigure}
        \begin{subfigure}{0.2\textwidth}
            \includegraphics[width=\textwidth]{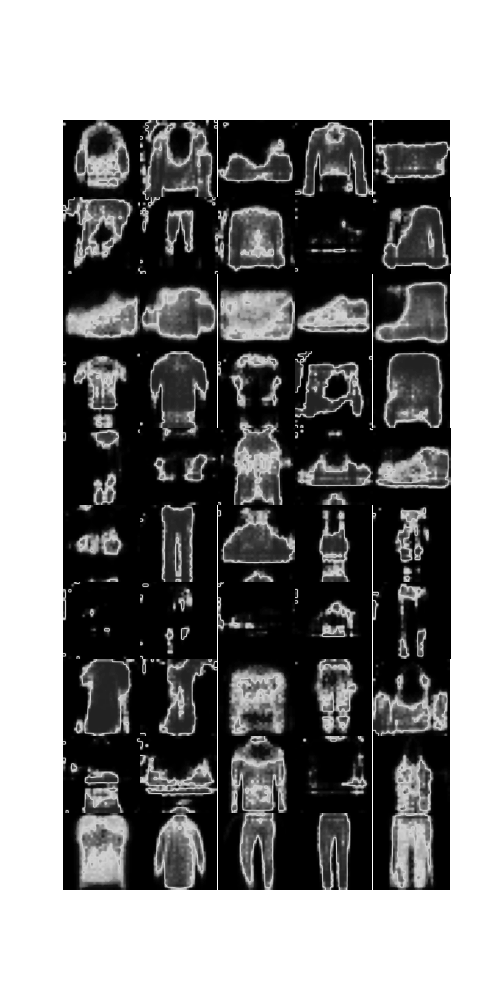}
            \caption{$L_1=L_2=4.5$}
            \label{fig:sam45}
    \end{subfigure}
        \begin{subfigure}{0.2\textwidth}
            \includegraphics[width=\textwidth]{out_fashionmnist_idvae0.0005_5.0_5.0_r10sample_summary.png}
            \caption{$L_1=L_2=5.0$}
            \label{fig:sam50}
    \end{subfigure}
\caption{Samples of Fashion-MNIST data generated with different inverse Lipschitz parameters of IL-LIDMVAE. The number of classes was set to $10$, and all distributions were Gaussian. Each row corresponds to a different category. With $L_1=L_2=1.5$, we obtain the ten true distinct classes with varied images.}
\label{fig:apsamfash}
\vskip -0.2in
\end{figure}
While the reconstruction performance does not change a lot, we can observe that the sampling quality declines with too high values of $L_1$ and $L_2$. $L_1=L_2=1.5$ achieves the best loss and is the only one that learned the ten true distinct classes.
\subsubsection{Text}
We show in Table \ref{tab:apsyn} additional data with different Lipschitz constants.
\begin{table}[t]
\caption{Results for synthetic text data. $\beta$ was set to 0.2 for $\beta$-VAE. The column entitled $L$ refers to the inverse Lipschitz constant of $f_\theta^{(1)}$ and $f_\theta^{(2)}$. NLL stands for negative log-likelihood, MI for mutual information and AU for active units.}
\label{tab:apsyn}
\vskip 0.15in
\begin{center}
\begin{small}
\begin{sc}
\begin{tabular}{lccccc}
\toprule
Model & $L$ & NLL & KL & MI & AU\\
\midrule
VAE   & - & 42.56 & 0.01 & 0.0 & 0.0  \\
$\beta$-VAE ($\beta=0.2$) & - & 42.34 & 0.08 & 0.0 & 0.0 \\
Lagging VAE & - & 45.44 & 2.13 & 1.0 & 1.0\\
LIDVAE & 0    & 56.67 & 0.24 & 0.2 & 0.8 \\
IL-LIDVAE &0.2  & \textbf{40.39} & 0.32 &0.3 & 1.0 \\
IL-LIDVAE &0.4 & 40.48 & 0.39 & 0.3 & 1.0 \\
IL-LIDVAE &0.5 & 40.48 & 0.60 & 0.5 & 1.0 \\
IL-LIDVAE &0.6  & 41.65 & 0.85 & 0.6 & 1.0 \\
IL-LIDVAE &1.0  & 44.06 & 2.45 & 0.8 & 1.0 \\
IL-LIDVAE &1.5  & 44.50 & 3.80 & 0.9 & 1.0 \\
IL-LIDVAE &5.0  & 52.34 & 8.13 & 1.3 & 1.0 \\
\ \ +annealing & - & 39.6 & 0.38 & 0.1 & 1.0 \\
\bottomrule
\end{tabular}
\end{sc}
\end{small}
\end{center}
\vskip -0.1in
\end{table}

\section{Log-Sobolev Inequality}
\label{apc}

In this appendix, we briefly describe the Log-Sobolev inequality, mentioned in the main paper, and some well-known properties as well. We only treat distributions absolutely continuous with respect to the Lebesgue measure for simplicity.
\begin{definition}
Distribution $\nu$ satisfies the \em{Log-Sobolev inequality} (LSI) with a constant $\alpha$ if for all probability density functions $\rho$ absolutely continuous with respect to $\nu$, the following holds:
\begin{align*}
  D(\rho\mid\mid\nu)\le \frac{1}{2\alpha}F(\rho\mid\mid\nu),
\end{align*}
where $D(\rho\mid\mid\nu)=\E_{\rho}\left[ \log{\frac{\rho}{\nu}}\right]$ is the KL-divergence of $\rho$ with respect to $\nu$, and $F(\rho\mid\mid\nu)=\E_{\rho}\left[ \left\|\nabla\log{\frac{\rho}{\nu}}\right\|^2\right]$ is the relative Fisher information divergence of $\rho$ with respect to $\nu$.
\end{definition}
The Gaussian distribution satisfies LSI as implied by the following proposition.
\begin{proposition}[\citet{BE2006}]
Suppose $q\propto \e^{-f}$ is a probability density, where $f:\mathbb{R}^d\to \mathbb{R}$ is a smooth function. If there exists a constant $c>0$ such that $\nabla^2 f\succeq cI_d$, then $q(z)dz$ satisfies LSI with constant $c$.
\end{proposition}
Therefore, if we are only using Gaussian distributions in the VAE, then posterior collapse in terms of the relative Fisher information divergence results in posterior collapse in terms of KL-divergence.
\par The following statements show that LSI is robust under bounded perturbations and Lipschitz mappings.
\begin{proposition}[\citet{HS1986}]
Suppose $q$ is a probability density that satisfies LSI with constant $\alpha$. For any bounded function $B:\mathbb{R}^d\to\mathbb{R}$, $q_B\propto \e^{B}q$ satisfies LSI with constant $\alpha\e^{-4\|B\|_\infty}$.
\end{proposition}
\begin{proposition}[\citet{VW2019}]
Suppose $q$ is a probability density that satisfies LSI with constant $\alpha$. If $H:\mathbb{R}^d\to\mathbb{R}^d$ is a differentiable $L$-Lipschitz mapping, then the distribution of $H(z)$ with $z\sim q(z)$ satisfies LSI with constant $\alpha/L^2$.
\end{proposition}

\end{document}